\newcommand{\Adam}{{\texttt{Adam}}}
\newcommand{\AdamW}{{\texttt{AdamW}}}
\newcommand{\Signum}{{\texttt{Signum}}}
\newcommand{\SignSGD}{{\texttt{SignSGD}}}
\newcommand{\SGD}{{\texttt{SGD}}}
\newcommand{\RMSprop}{{\texttt{RMSprop}}}
\newcommand{\Muon}{{\texttt{Muon}}}
\newcommand{\Scion}{{\texttt{Scion}}}
\newcommand{\SOAP}{{\texttt{SOAP}}}
\newcommand{\Shampoo}{{\texttt{Shampoo}}}
\newcommand{\var}{\sigma^2}
\definecolor{shadecolor}{gray}{0.90}
\declaretheoremstyle[
headfont=\normalfont\bfseries,
notefont=\mdseries, notebraces={(}{)},
bodyfont=\normalfont,
postheadspace=0.5em,
spaceabove=6pt,
mdframed={
  skipabove=8pt,
  skipbelow=8pt,
  hidealllines=true,
  backgroundcolor={shadecolor},
  innerleftmargin=4pt,
  innerrightmargin=4pt}
]{shaded}
\pgfplotsset{compat=1.17}
\DeclareMathOperator{\argmininn}{argmin} 
\newcommand{\argmin}[1]{ \underset{#1}{\argmininn} \;}
\newcommand{\R}{\mathbb{R}}
\newtheorem{proposition}{Proposition}
\newcommand{\dgs}[1]{{\scriptsize{#1}}}
\newcommand{\EMA}{\texttt{EMA}}
\definecolor{customgreen}{RGB}{61,157,115}
\definecolor{customred}{RGB}{213,112,95}
\definecolor{customgreen2}{RGB}{34,120,85}  
\definecolor{customred2}{RGB}{160,60,52}    
\definecolor{pastelgreen}{RGB}{158,206,185}
\definecolor{pastelred}{RGB}{234,183,175}
\definecolor{cornflowerblue}{rgb}{0.39, 0.58, 0.93}
\definecolor{darkgray}{rgb}{0.0, 0.1, 0.5}
\definecolor{ocean}{rgb}{0.118, 0.392, 0.549}
\title{In Search of Adam’s Secret Sauce}
\author{%
  Antonio Orvieto~\thanks{\texttt{	antonio@tue.ellis.eu}.} \\
  ELLIS Institute T\"ubingen, MPI-IS\\
  T\"ubingen AI Center, Germany\\
  \And
  Robert M. Gower \\
  CCM, Flatiron Institute, Simons Foundation \\
  New York, US \\
}
\begin{document}

\maketitle

\vspace{-3mm}

\begin{abstract}
\vspace{-3mm}
Understanding the remarkable efficacy of Adam when training transformer-based language models has become a central research topic within the optimization community. To gain deeper insights, several simplifications of Adam have been proposed, such as the signed gradient and signed momentum methods. In this work, we conduct an extensive empirical study — training over 1,500 language models across different data configurations and scales — comparing Adam to several known simplified variants. We find that signed momentum methods are faster than SGD, but consistently underperform relative to Adam, even after careful tuning of momentum, clipping setting and learning rates. However, our analysis reveals a compelling option that preserves near-optimal performance while allowing for new insightful reformulations: constraining the Adam momentum parameters to be equal, $\beta_1=\beta_2$. Beyond robust performance, this choice affords new theoretical insights, highlights the {\color{customred2}``\textit{secret sauce}''} on top of signed momentum, and grants a precise statistical interpretation: we show that Adam in this setting implements a natural online algorithm for estimating the mean and variance of gradients—one that arises from a mean-field Gaussian variational inference perspective.

\end{abstract}

\section{Introduction}
\label{sec:intro}
\vspace{-3mm}

\begin{wrapfigure}[21]{R}{0.35\textwidth}
\vspace{-5mm}
\centering
  \includegraphics[width=0.33\textwidth]{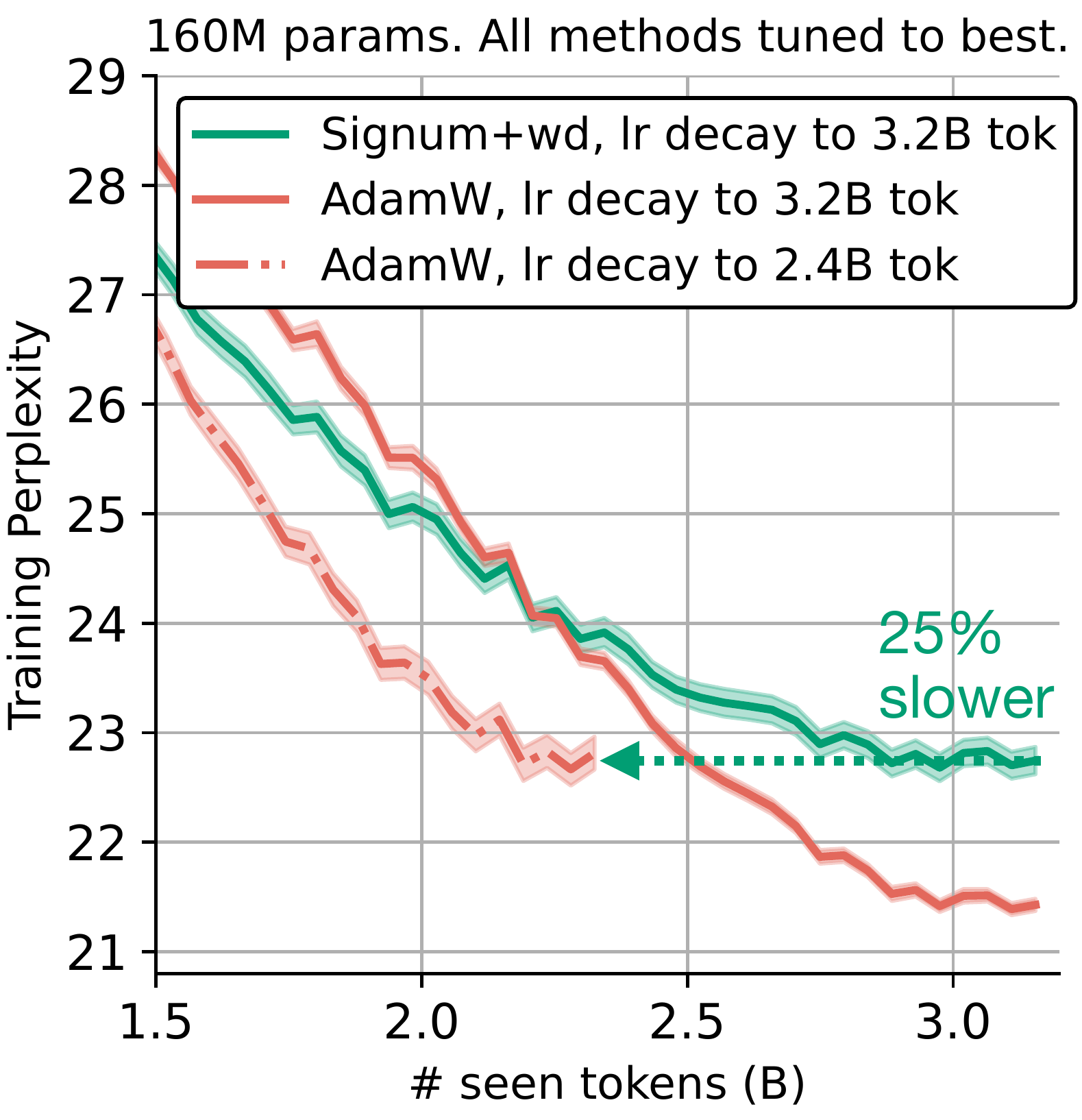}
  \vspace{-2mm}
  \caption{\small \textit{Pretraining on SlimPajama with Chinchilla-optimal~\citep{hoffmann2022training} scaling. Both momentum and learning rates for \Signum{} are extensively tuned~(\S\ref{sec:exp}).
  While \Signum{} closes $96\%$ of the perplexity gap between \Adam{} and \SGD{} with momentum~(Table~\ref{tab:algorithm-performance}), still results in a $25\%$ slowdown : \Adam{} achieves the same performance with 3/4 of the budget.}}
  \label{fig:cosine}
\end{wrapfigure}

Despite a decade of research into efficient and performant adaptive optimizers for deep learning, the \textit{de facto} choice for large-scale training today remains \Adam~\citep{kingma2014adam}, especially for training language models~(LMs)~\citep{grattafiori2024llama, liu2024deepseek}. At the root of this choice is the peculiar geometry of optimization landscapes induced by the transformer architecture~\citep{noci2022signal, zhang2024why}, as well as the noisy/unbalanced nature of tokenized text data~\citep{zhang2020why, kunstner2024heavy}.

In recent years, the surge of extremely large-scale and expensive-to-pretrain language models has further pushed the community to better understand \Adam's performance and to propose faster, efficient, and robust alternatives. Towards achieving this goal, contemporary studies~\citep{kunstner2023noise,bernstein2024old} have brought up a close similarity between the performance of \Adam{} and \SignSGD~\citep{bernstein2018signsgd} with momentum. 
While such results are extremely valuable to  forward our understanding, they are not precise enough : already at a scale of 160M parameters we found that extensive tuning of \Signum{}  (\SignSGD{} with momentum), while closing 96\% of the perplexity gap between \SGD{} and \Adam{}, results in a 25\% effective slowdown 
(Figure~\ref{fig:cosine}).

\begin{table}[t]
\centering
\vspace{-3mm}
\caption{\small \textit{~\textbf{(Signum closes 96\% of the perplexity gap between Adam and SGD)}
Validation perplexity comparison of widely used optimizers that interpolate between \SGD{} and \Adam{}, evaluated on a language modeling task (160M parameters, 3.2B SlimPajama tokens, sequence length 2048, batch size 256 -- Chinchilla optimal).
We report the mean and 2-sigma interval of validation perplexity (on 100M held-out tokens) across 3 initialization seeds. Weight decay is always decoupled
~\citep{loshchilov2018decoupled} and set to $0.1$~\citep{biderman2023pythia,liu2024deepseek} except for \SGD{} where we further tune~(\S\ref{app:more_experiments}). \RMSprop{} does not use momentum, and Gclip is global norm clipping to $1$~(before applying momentum), Cclip is coordinate-wise clipping~(after applying momentum). Other hyperparameters, for all other methods, are carefully tuned, see e.g. Figure~\ref{fig:big_sweep} and \S\ref{sec:exp}.\\ To optimally tune hyperparameters~(e.g. Figure~\ref{fig:big_sweep}), we performed a total of 582 full training runs.}}
\vspace{2mm}
{\small
\setlength{\tabcolsep}{4pt}
\begin{tabular}{lccccccc}
\hline
 & \Adam{}  &  \Signum{} & \RMSprop{}& \SGD +Cclip & \SignSGD{} & \SGD +Gclip & \SGD \\ \hline
Val ppl. & \textbf{21.86\dgs{$\pm$ 0.21}}  & \underline{23.23\dgs{$\pm$ 0.16}}& 27.04\dgs{$\pm$ 0.34}& 33.40\dgs{$\pm$ 0.39}  & 36.78\dgs{$\pm$ 0.57} & 37.76\dgs{$\pm$ 0.61} &  53.62\dgs{$\pm$ 5.14} \\ \hline
\end{tabular}}
\vspace{-3mm}
\label{tab:algorithm-performance}
\end{table}

While for large-scale training, the slowdown in Figure~\ref{fig:cosine} is not acceptable, it may seem unnecessary or anachronistic to further explain it, in light of recent algorithms claiming to have further improved the performance of \Adam, e.g. \Muon~\citep{jordan2024muon,liu2025muon, shah2025practical}, \Scion~\citep{pethick2025training}, and \Shampoo-based~\citep{gupta2018shampoo} methods such as \SOAP~\citep{vyas2025soap}. However, a close inspection of such optimizers reveals that, while gains over vanilla \Adam{} are solid, \textit{most of these methods still use \Adam{} on a specific subset of parameters}: For instance, in recent scaled-up versions of \Muon{}~\citep{liu2025muon, shah2025practical}, \Adam{} is used to update embedding, LM heads and normalization parameters~\footnote{Coincidentally, the ones that were shown to be most sensitive during training~\citep{zhao2025deconstructing, kunstner2024heavy}. \Scion{} claims a greater independence from \Adam, yet adopts an architecture where normalization layers have no trainable gain parameters. While results are promising, experiments in the usual setup are needed.}, and on the other parameters the \Muon{} update is normalized to have a similar RMS value similar to the \Adam{} update. Further, \SOAP's improvements stem from the application of \Adam{} in the preconditioner's eigenbasis.

The discussion above and the results in Figure~\ref{fig:cosine} inspires us to further dissect -- once again~\citep{balles2018dissecting} -- the mechanisms of \Adam{} compared to those of simpler methods in language modeling with transformers.\\
 Towards improving our understanding of \Adam{}, we make the following contributions:
\begin{itemize}[itemsep=0pt, leftmargin=*]
    \item We perform a large-scale evaluation~($\sim$ 10 thousand NVIDIA A100-SXM4-80GB GPU hours) of the performance of  established algorithms which claim a theoretical or empirical similarity/dissimilarity with \Adam{} on 160M parameters LMs with usual configurations~\citep{biderman2023pythia,black2022gpt}, at a compute-optimal budget on different datasets, at different batch-sizes and sequence lengths~(up to 2048 tokens). Crucially, we sweep over all momentum parameters for each method, for each learning rate in our grid -- for each of our settings. We find that, while clipping and sign descent methods can close most of the gap with \SGD, their performance is not satisfactory in comparison to \Adam{}~(Figure~\ref{fig:big_sweep}). We make all of our data, e.g. loss dynamics for all our settings, publicly available at~\url{https://github.com/aorvieto/SecretSauce}.
    \item 
    Through our extensive tuning of \Adam~(e.g., Figure~\ref{fig:big_sweep}, comprising 200 distinct hyperparameter settings), we identify one simplification that does perform well: that of setting $\beta_1 =\beta_2$~(emerging practical choice in contemporary literature~\citep{zhao2025deconstructing, shah2025practical, cattaneo2025tuning, zhang2025how}).
         We validate this finding~(\S\ref{sec:ablations}) at different batchsizes, data source, token budget, sequence length and larger scale~(410M): $\beta_1 =\beta_2$ performs at near-optimality across the majority of our experiments, see Figure~\ref{fig:correlation}. Given the breadth of our evaluation and the robustness of this finding, we recommend adopting 
$\beta_1 =\beta_2$ as the default setting for Adam for training language models at similar data and parameter scales. More broadly, this perspective suggests that Adam can be effectively treated as a one-parameter optimizer~(as \Signum{}).    
   \item  We show in \S\ref{sec:theory}, that reducing $\beta_1=\beta_2=\beta$ to a single parameter, leads to a surprising new interpretation of \Adam{}:  it is built on top of a nontrivial yet principled online method for estimating mean and variance of the gradients. Indeed, we can view the two momentum buffers as the result of an online Gaussian Variational inference method for tracking the mean and variance of the gradients as they change across iterations. This viewpoint directly adds to the discussion by~\citet{balles2018dissecting}, yet affords more precision induced by our empirically-informed simplification. 
    \item We offer a toy quadratic example illustrative of our findings, building on top of recent works on the peculiar landscape of transformer-based language modeling problems~\citep{noci2022signal,zhang2024why}. This example replicates the gaps between tuned \SGD{}, \Signum{}, and \Adam{} with equal betas in a 9-dimensional setting, helpful for future research and to gain intuition.
\end{itemize}



\begin{figure}
    \centering
\includegraphics[width=\linewidth]{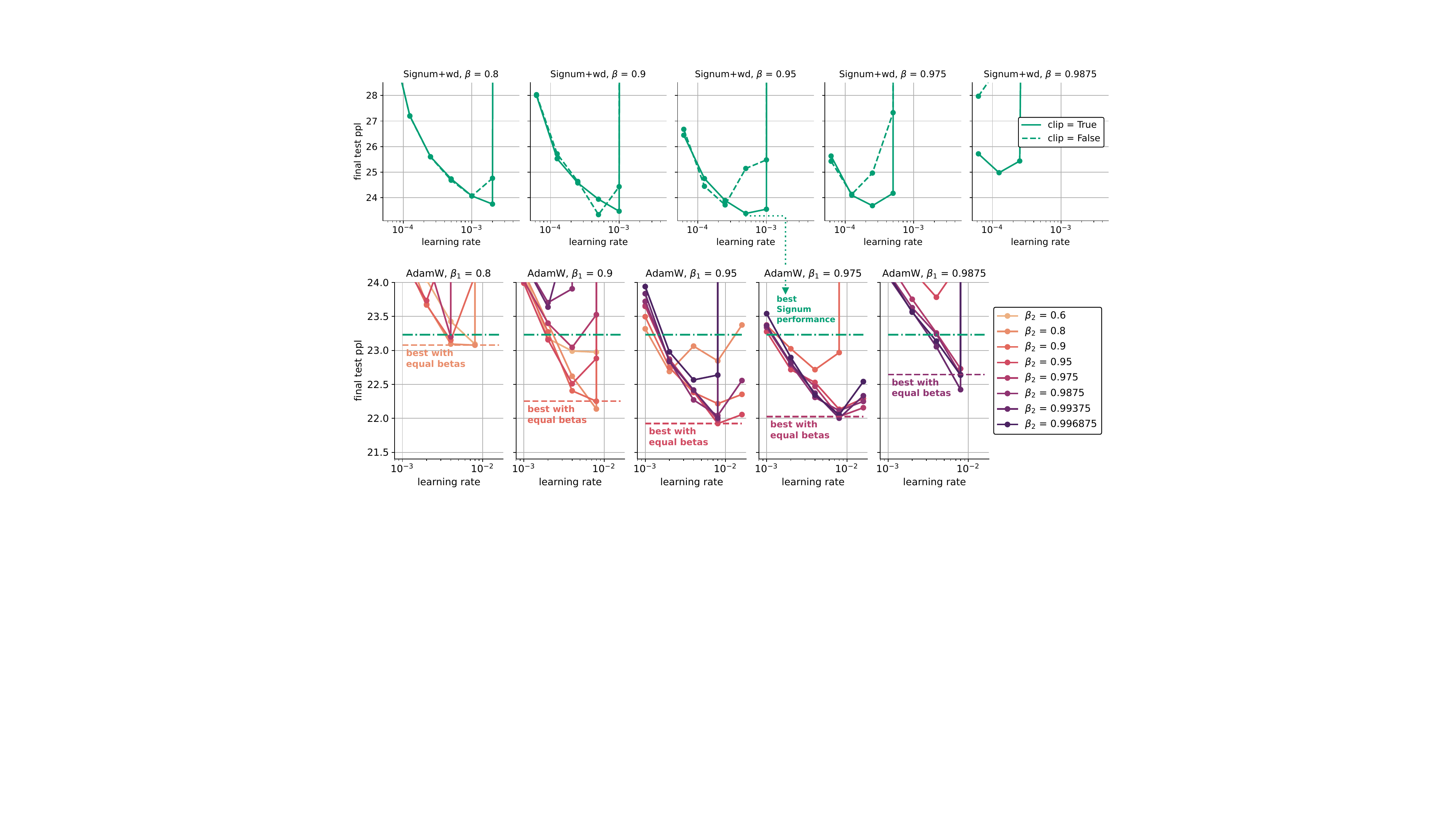}
    \caption{\small \textit{Training a total of \textbf{265 language models} with \textbf{160M} parameters with \textbf{3.2B} SlimPajama-627B tokens, sequence length of 2048, batch size of 256. Shown is the final test perplexity on 100M held-out tokens. Some underperforming runs are not shown to keep focus on the most interesting range. For a careful description of our tuning grid, see~\S\ref{app:exp-details}. \textbf{\color{customgreen}Takeaway 1:} Validation perplexity of highly tuned~(65 hyperparameter configurations) \Signum{} with weight decay 0.1 -- top row -- is around 23.23~(see Table~\ref{tab:algorithm-performance} for multiple seeds at optimal tuning). We ablate on the momentum parameter, learning rate, and presence of global clipping before averaging. The best performance of \Signum{} is reported as a green horizontal line on the second row~(200 \Adam{} runs, with weight decay of $0.1$). Most \Adam{} runs perform better than optimally tuned \Signum{}. \textbf{\color{customred}Takeaway 2:} For each $\beta_1$, the optimal corresponding $\beta_2$~(after learning rate tuning) is similar. The higher $\beta_1$, the higher $\beta_2$ for optimal performance~(optimal $\beta$s are correlated).}}
    \label{fig:big_sweep}
\end{figure}


 \section{Preliminaries and Related Works}
\label{sec:preliminaries}

For  a  signal $(s_k)_{k\in\mathbb{N}}$ and $\beta\in[0,1)$, we define the 
  $\beta$-normalized exponential moving average:
\begin{equation}
\label{eq:ema}
    \EMA_{\beta}[s_k] = \beta \EMA_{\beta}[s_{k-1}] + (1-\beta) s_k,\qquad \EMA_{\beta}[s_0] := s_0 \ \text{(or zero)}.
\end{equation}

The \Adam{} optimizer~\citep{kingma2014adam} without bias correction~\footnote{We show in Table~\ref{tab:ablation_ZI_BC} that the presence of bias correction does not affect our results at the best hyperparameter configuration. However, for all our runs, we use the Pytorch implementation including this factor, for simplicity.} takes the following form:
\begin{equation}
    w_{k+1} = w_k -\eta_k \left(\sqrt{\EMA_{\beta_2}[g_k^2]}+\epsilon\right)^{-1}\EMA_{\beta_1}[g_k]\qquad \tag{\Adam}
\end{equation}
where all division and multiplications are element-wise, $w_k, g_k\in \R^d$ are model parameters and gradients at iteration $k$, $\eta_k$ is the scheduled learning rate, and $\epsilon>0$ is a small constant. \RMSprop{}~\citep{tieleman2012lecture} is an earlier method that sets $\beta_1=0$.

One special case, and simplification, of \Adam{} is to consider  $\beta_1 = \beta_2 = \epsilon = 0$ which gives \SignSGD{}:
\begin{equation}
    w_{k+1} = w_k -\eta_k \texttt{sign}[g_k].
\tag{\SignSGD}
\end{equation}

A practical variant of \SignSGD, which has shown strong performance in language modeling~\citep{kunstner2023noise}, first computes an exponential moving average (EMA) -- or momentum -- of the gradients before applying the \texttt{sign} operator~\citep{bernstein2018signsgd}:
\begin{equation}
    w_{k+1} = w_k -\eta_k \texttt{sign}[\EMA_{\beta}[g_k]].
\tag{\Signum}
\end{equation}

In practice, every language modeling pipeline~(see e.g.~\citep{karpathy2022nanogpt}) incorporates some gradient clipping strategy~\citep{pascanu2013difficulty}, a component known to stabilize training in the autoregressive setting and to make gradients more robust to the stochasticity of language~\citep{zhang2020adaptive}. Global norm clipping~(that we abbreviate Gclip), processes gradients fresh out of the backward pass:
$$\texttt{Gclip}[g_k] = \min\left\{1, \frac{1}{\|g_k\|_2}\right\} g_k.$$
In our experiments, we start from vanilla SGD with momentum: $w_{k+1} = w_k -\eta_k \EMA_{\beta}[g_k]$ and ablate on the positive effect of Gclip before applying momentum. Regarding coordinate clipping~(Cclip), a softer version of \texttt{sign}, we consider applying it to $\EMA_{\beta}[g_k]$ -- in connection with \Signum{}.

\vspace{-3mm}
\paragraph{Research on Adam, a short summary.} Despite initial concerns on generalization~\citep{wilson2017marginal} and convergence~\citep{reddi2018convergence}, after the introduction of decoupled weight decay~(i.e., \AdamW~\citep{loshchilov2018decoupled}) \Adam{} rapidly became the de-facto standard optimizer in deep learning, with works highlighting its landscape adaptation properties~\citep{orvieto2022vanishing} and its debated connections to empirical Fisher preconditioning~\citep{kunstner2019limitations}.

 With the advent of Transformers~\citep{vaswani2017attention}, early works noticed an intriguing gap with \SGD{} performance in language modeling~\citep{xiong2020layer}~(much larger than what can be observed, e.g., in CNNs on image data), that was initially attributed to heavy-tail noise in text data~\citep{simsekli2019tail,zhang2020why} -- suggesting \Adam{} performance to be correlated with its adaptive coordinate clipping mechanism~\citep{zhang2020why}. 

  As models became larger and more hardware-demanding, interest spiked in the community to reduce the memory footprint of \Adam{}~\citep{li2023memory,zhang2024adam} and to search for more efficient options~\citep{chen2023symbolic,liu2023sophia}. Current trends,
draw an intriguing connection between \Adam{} and \SignSGD{}~\citep{bernstein2024old}, and in particular with its momentum variant: \Signum{}~\citep{bernstein2018signsgd}. 
This connection was first suggested in early attempts to understand \Adam{}’s empirical performance~\citep{balles2018dissecting},
and has recently gained renewed attention in light of transformer architectures and their heterogeneous optimization landscapes~\citep{noci2022signal, zhang2024why, tomihari2025understanding,kunstner2024heavy,zhao2025deconstructing}. 
These landscape-based arguments are now more compelling, as recent evidence shows that \Adam{} and signed momentum methods outperform SGD even in deterministic settings~\citep{kunstner2023noise}.


\vspace{-3mm}
\paragraph{Our approach.} 

Although recent literature highlights many connections between \Adam{} and simpler methods such as \Signum{}—which involve fewer hyperparameters, the computational demands of thoroughly studying \Adam{} on small- to medium-scale language models remain prohibitive for most academic optimization researchers. This challenge is amplified by the combinatorial explosion of hyperparameter configurations required for rigorous comparisons. In \S\ref{sec:exp}, we aim to provide a comprehensive empirical reference for optimizer performance across a range of language modeling settings. Our key findings are distilled into two main takeaways (Figure~\ref{fig:big_sweep}), which are further supported by theoretical insights in \S\ref{sec:theory}.

\section{Experiments}
\label{sec:exp}

In our experiments, we systematically explore Transformer-based language models using a nanoGPT~\citep{karpathy2022nanogpt} implementation\footnote{\url{https://github.com/Niccolo-Ajroldi/plainLM/tree/main}} enhanced by recent advancements such as Rotational Positional Embeddings~\citep{su2024roformer}, RMSNorm normalization~\citep{zhang2019root}, and SwiGLU activation functions~\citep{shazeer2020glu}. We adopt a robust training protocol inspired by successful practices established in large language models like LLaMa~\citep{touvron2023llama}, GPT-neox~\citep{black2022gpt}, GPT-J~\citep{wang2022gpt} and Pythia~\citep{biderman2023pythia}, leveraging techniques including bfloat16 precision, linear warm-up followed by a cosine annealing schedule~\citep{loshchilov2016sgdr}, and global gradient norm clipping~(unless specified). Our model configurations follow~\citep{biderman2023pythia} and are presented, alongside a detailed description of all tuning settings and resources, in \S\ref{app:exp-details}.

\subsection{Extensive benchmarking at 160M parameters}
\label{sec:160_standard}
We conduct 475 compute-optimal pretraining runs on the SlimPajama-627B dataset~\citep{cerebras2023slimpajama}, using a sequence length of 2048, a batch size of 256, and a decoupled weight decay of 0.1~\citep{loshchilov2018decoupled} (except for \SGD{}). 
 We always report validation perplexity on a held-out subset of 100M tokens.
 Results from these tuning sweeps are summarized in Table~\ref{tab:algorithm-performance}, Figure~\ref{fig:big_sweep}, and Appendix~\ref{app:tuning_other_methods}. The runs span the following configurations:
 
\noindent{\textbullet \; \SGD ~(131 runs):} Tuned parameters include weight decay (too large causes instability), global norm clipping~(Gclip). We also consider clipping coordinates after applying momentum~(Cclip). For all these options, momentum and learning rates are independently tuned.\\[0.2cm]
\noindent{\textbullet \; \RMSprop ~(48 runs):} Tuned parameters include momentum on the preconditioner and learning rate.\\[0.2cm]
\noindent{\textbullet \; \Signum~(70 runs):} Tuned parameters include global norm clipping, momentum, and learning rate.\\[0.2cm]
\noindent{\textbullet \; Momentum on \SignSGD~(35 runs):} This variant inverts the order of the \texttt{sign} and \EMA{} operations (and performs worse than \Signum). Clipping has no effect here due to the sign operation.\\[0.2cm]
\noindent{\textbullet \; \AdamW~(200 runs):} Tuned parameters include both momentum terms and the learning rate.\\[0.2cm]
Two additional seeds are provided for the best performing hyperparameter settings, see Table.~\ref{tab:algorithm-performance}. 

\textbf{Choice for betas grid.} 
While we vary the learning rate by powers of two, our choice of moving average parameters is guided by recent insights into \Adam{} scaling behavior
~\citep{malladi2022sdes,compagnoni2025adaptive}: we choose $\beta = 1-\kappa(1-\beta_{\text{base}})$ where $\beta_{\text{base}} = 0.9$ and $\kappa\in\{2^{-5},2^{-4}, \dots,2^2\}$. This makes it such that the accumulation factor $1/(1-\beta) = 1/(\kappa(1-\beta_{\text{base}}))$.


 \textbf{\color{customred}Takeaway 1}. 
 As shown in Figure~\ref{fig:big_sweep} and Table~\ref{tab:algorithm-performance}, optimally tuning \Signum{} with weight decay leads to significant improvements over standard \SGD{}, in line with recent findings~\citep{kunstner2023noise,zhao2025deconstructing}.
Nonetheless, \Adam{} consistently outperforms the alternatives across most settings, suggesting that it retains a key advantage—a "secret sauce"—that continues to set it apart from better-understood methods in large-scale optimization tasks.


This gap is not limited to this specific setup. In \S\ref{sec:ablations} we discuss results on another dataset~(Fineweb), with disabled weight decay, and shorter sequence lengths. Further, we ablate on other potential confounders~(initialization of moving averages, bias corrections, Adam $\epsilon$ value) in \S\ref{sec:sanity_checks}.   

\begin{wrapfigure}[28]{R}{0.30\textwidth}
\vspace{-5mm}
\centering
  \includegraphics[width=0.28\textwidth]{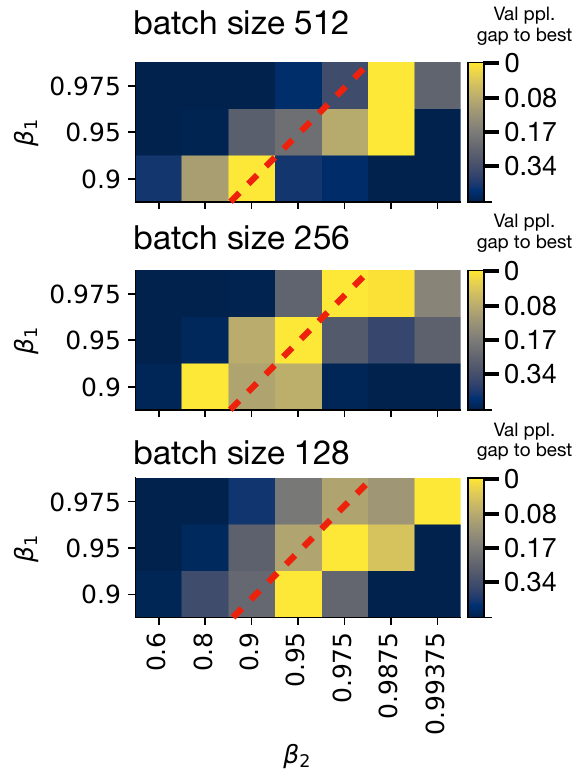}
  \vspace{-2mm}
  \caption{\small \textit{Summary of the results in \S\ref{app:more_batches}. At different batch sizes, for each  $\beta_1\in[0.9, 0.95, 0.975]$, we show the best-performing $\beta_2$~(highest score, yellow) and the gap between its performance and that of other options in the grid. We notice high correlation between beta values~(e.g., $\beta_2=0.9875$ is a terrible option at $\beta_1=0.9$, but a good one at $\beta_1=0.975$). While results are noisy, notice that $\beta_1=\beta_2$ never degrades performance more than $0.3$ points. In contrast~(Table~\ref{tab:algorithm-performance}, the gap with \Signum{} can be as high as $1.37$ points.}}
  \label{fig:correlation}
\end{wrapfigure}

\textbf{\color{customgreen}Takeaway 2 (a)}. In Figure~\ref{fig:big_sweep}, we clearly see that $\beta_1=\beta_2$ yields near-optimal performance in \Adam{}, for the five $\beta_1$ values we considered. In \S~\ref{sec:ablations} we show similar results at different batch sizes, different sequence lengths, and with disabled weight decay on a different dataset. We also extend this observation to 410M parameters models~(Figure~\ref{fig:big_sweep410}). This empirical finding serves as a basis for our theory in \S\ref{sec:theory}.

\textbf{\color{customgreen}Takeaway 2 (b)}. 
As a corollary to Takeaway~2, Figure~\ref{fig:correlation} shows that the best performance is not only achieved when $\beta_1 = \beta_2$, but also improves as the two values become closer. Among 500 runs on 160M-parameter models, we observe a clear correlation: lower loss is associated with smaller differences between $\beta_1$ and $\beta_2$. This suggests that gradient smoothing ($\beta_1$) and preconditioner smoothing ($\beta_2$) should not be treated as independent operations---optimal performance often arises when they act in concert. 

 To put to the test our second takeaway in \textbf{different training settings}, we consider shorter sequence lengths~(512, Figure~\ref{fig:adam_512}), higher/lower batch sizes~(Figure~\ref{fig:big_sweep_bs_big} \& Figure~\ref{fig:big_sweep_bs_small}), different data (Fineweb) and absence of weight decay~(Fig,~\ref{fig:160_fineweb}). See discussion in \S\ref{sec:ablations}.

\textbf{Standard choice for betas.} While in standard deep learning~(also Pytorch default) $\beta_2>\beta_1~(0.999, 0.9)$, in language modeling the choice $\beta_1=0.9, \beta_2=0.95$ is much more common. A lower value for $\beta_2$ was shown to help mitigate loss spikes~\citep{cattaneo2025tuning, compagnoni2025adaptive}, and several recent studies have started to adopt $\beta_1=\beta_2=0.95$ as a default~\citep{zhao2025deconstructing,shah2025practical, zhang2025how}. All our findings confirm this choice for tuning~(see e.g.~Figure~\ref{fig:big_sweep}), of which we evaluate validity extensively for several values of $\beta_1$. 

\textbf{Theoretical relations between betas.} We note that a correlation between $\beta$ parameters was also noted first by~\citet{reddi2018convergence, alacaoglu2020new}~for AMSgrad, and later by~\citet{zhang2022adam} for \Adam{}, where it is shown that if $\beta_2$ is large enough and $\beta_1<\sqrt{\beta_2}$, it converges to the neighborhood of critical points. Further,~\citet{xie2024implicit} showed that weight decay in \AdamW{} leads to convergence to a constrained minimizer only if $\beta_2>\beta_1$.

\subsection{Ablations}
\label{sec:ablations}

\paragraph{More Tokens.} We find our \textbf{\color{customred}Takeaway 2} to also hold at a higher token budget. In \S\ref{app:more_tokens}, we show a trend very similar to Fig.~\ref{fig:big_sweep} for models trained for $2\times$ the Chinchilla-optimal budget.

\vspace{-3mm}
\paragraph{Different batch size.} We find our \textbf{\color{customred}Takeaway 2} to be robust to batch size. In the same setting as Figure~\ref{fig:big_sweep} yet at a slightly lower compute budget due to hardware limitations~(2.5B parameters), we  find that, even at batch size 128 and 512 the choice $\beta_1=\beta_2$ yields near-optimal performance. This step involves training 500 models, see~\S\ref{app:more_batches} for visualizations similar to Figure~\ref{fig:big_sweep} and a discussion.


\vspace{-3mm}
\paragraph{Different sequence length.} In \S\ref{app:shorter_seq}, we find our \textbf{\color{customred}Takeaway 2} to also hold at shorter sequence length of 512~(Figure~\ref{fig:adam_512}). We note that here performance of \Signum{} is closer to that of \Adam{} compared to Figure~\ref{fig:big_sweep} -- yet, \Adam{} is still superior by a substantial margin~(~0.7 validation perplexity), \textbf{\color{customgreen}Takeaway 1}. This pattern agrees well with the results in~\citep{zhao2025deconstructing}, who found other methods to be competitive with \Adam{} at short context lengths. Our experiments in Figure~\ref{fig:adam_512} and Figure~\ref{fig:big_sweep} suggest that \Adam{} performance particularly shines at higher sequence lengths.

\vspace{-3mm}
\paragraph{Different data and weight decay.}In Figure~\ref{fig:160_fineweb} we test both \textbf{\color{customgreen}Takeaway 1} and \textbf{\color{customred}Takeaway 2} on Fineweb~\citep{penedo2024the}. We take this opportunity to also deactivate weight decay~($\lambda=0$), as the optimal \Signum{} learning rates in Figure~\ref{fig:big_sweep} suggest decoupled weight decay $w = w-\lambda\eta w$ acts differently for the two methods, likely needing different tuning. When deactivated, we still see a substantial gap between \Signum{} and \Adam{}, as well as strong performance with equal betas.

\begin{wrapfigure}[13]{R}{0.49\textwidth}
\vspace{-4mm}
\centering
  \includegraphics[width=0.48\textwidth]{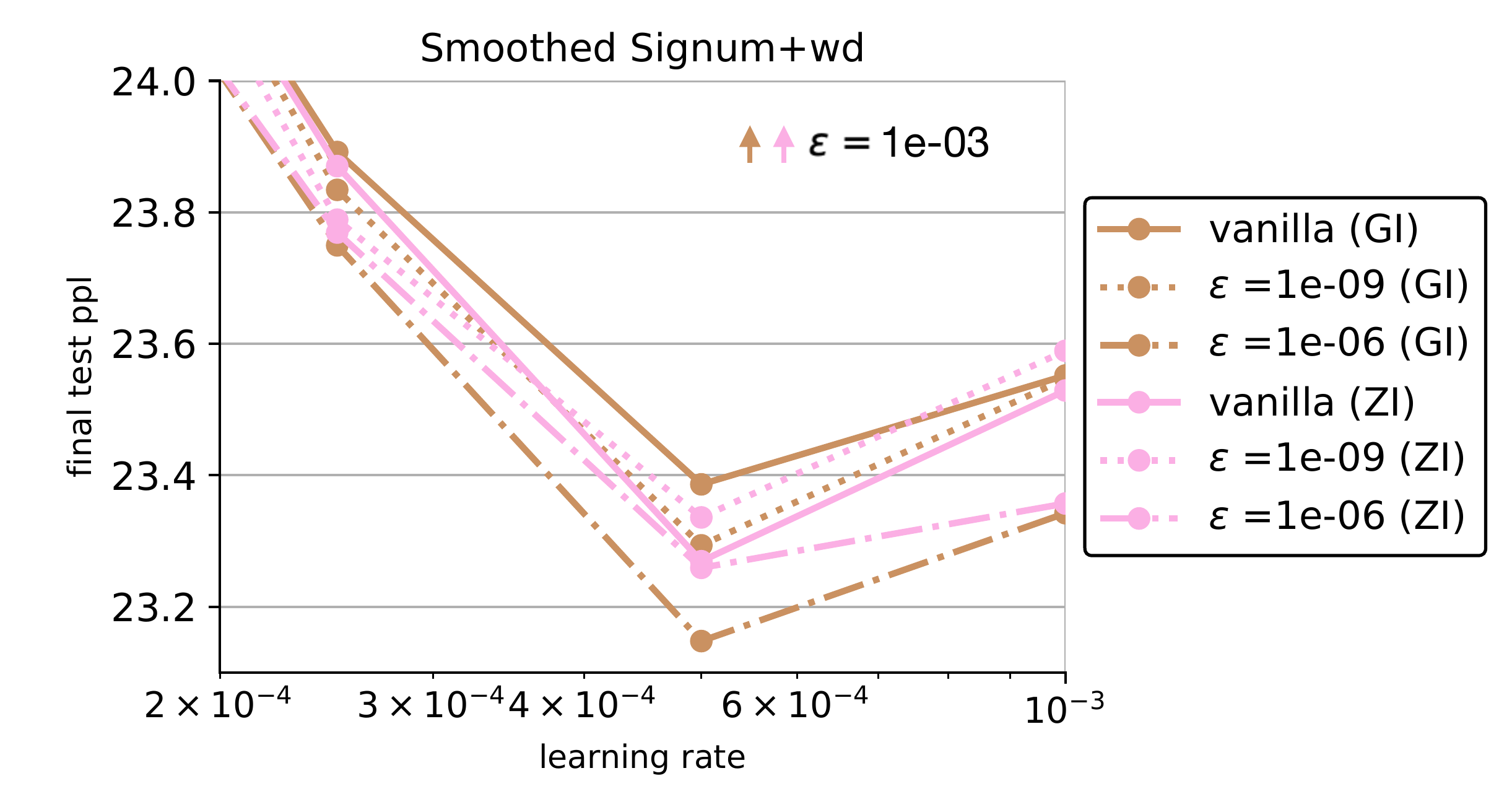}
  \vspace{-4mm}
  \caption{\small   \textit{Adding an $\epsilon$ mollifier to \Signum{}, i.e., using $m_k / (\sqrt{m_k^2} + \epsilon)$ offered little to no improvement. We also test both zero initialization (ZI) and gradient initialization (GI) for $m$, and find similar results with no significant improvement. $\epsilon=1e-3$ is significantly worse, hence is not shown. Similar finding:~Figure~\ref{fig:epsilon-signum-quad}.}}
  \label{fig:epsilon-signum}
\end{wrapfigure}

\vspace{-3mm}
\paragraph{Larger Models.}
We restrict our attention to the SlimPajama dataset and to validation of \textbf{\color{customred}Takeaway 2}. Results are presented in Figure~\ref{fig:big_sweep410}, comprising 44 full compute-optimal training runs of 410M parameter models, which confirm yet again strong and near-optimal performance at $\beta_1=\beta_2$.

\begin{figure}[t]
    \centering
    \vspace{-5mm}
\includegraphics[width=0.8\linewidth]{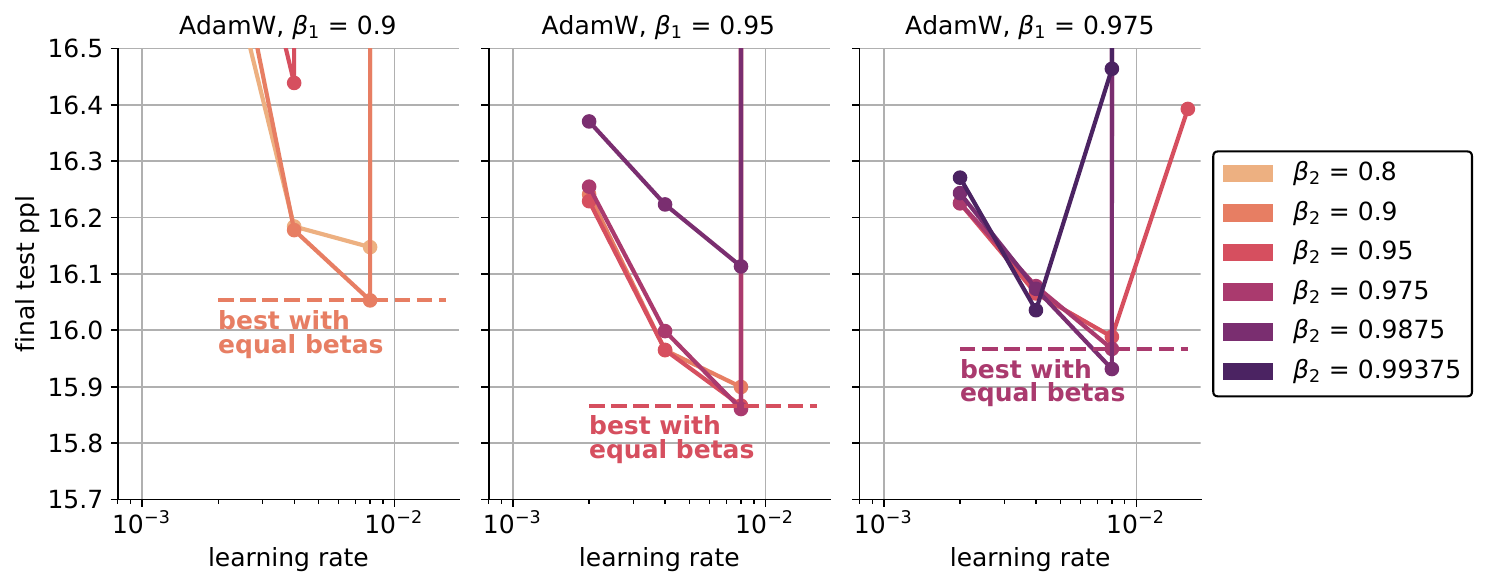}
    \caption{\small \textit{The final validation performance~(100M held-out tokens) for 44 trained LMs with 410M parameters trained on 8.2 B SlimPajama tokens~(Chinchilla-optimal). \textbf{Equal betas yields near-optimal performance}. We use gradient clipping and a batch size of 512~(scaled by 2 compared to Figure~\ref{fig:big_sweep}, as suggested by~\citet{zhang2025how}). Sequence length is 2048, weight decay is $0.1$. Note that the standard setting $(0.9, 0.95)$ is quite suboptimal here.}}
    \label{fig:big_sweep410}
    \vspace{-3mm}
\end{figure}

\subsection{Checking for confounders}
\label{sec:sanity_checks}

When comparing \Signum{} with \Adam{}, here are a few confounders that might affect results:  

   \vspace{-1mm}
\noindent{\bf The value of $\boldsymbol{\epsilon}$} in \Adam{} was shown to be important for numerical stability, and might affect performance~\citep{yuan2020eadam}. We show in Table~\ref{tab:ablation_eps} that one can choose an extremely small $\epsilon$ value in our setting. We cross-check the impact of including an $\epsilon$ factor in \Signum{}: we found that little can be gained from this strategy~(Figure~\ref{fig:epsilon-signum}). In short, we found that $\epsilon$ is not a crucial parameter in our setup. This is also liked to our findings on adaptive mollifiers, cf.~\S\ref{sec:theory}.


\vspace{-0mm}
\noindent{\bf Initialization of moving average parameters.} In Figure~\ref{fig:epsilon-signum} we also ablate on initialization of the moving average in \Signum{} and found no substantial differences. We perform this same ablation for \Adam{} and report comprehensive results with seeds in \S\ref{app:sanity_checks}.

\vspace{-3mm}
\paragraph{Bias correction.} While bias correction in \Adam{} is helpful in early training, final validation performance is almost unchanged, see the full training curve and results with seeds in \S\ref{app:sanity_checks}.

\begin{table}[ht]
\vspace{-3mm}
\centering
\caption{\small \textit{Effect of $\epsilon$ in \AdamW-- other parameters optimally tuned for $\epsilon = 10^{-8}$~(setting: Figure~\ref{fig:big_sweep}). 
All values between $\epsilon \in [10^{-6}, 10^{-15}]$ result in a similar validation perplexity.
}}
{\small
\begin{tabular}{lccccccc}
\hline
 & $\epsilon = 1e-3$  & $\epsilon = 1e-6$  & $\epsilon = 1e-8$  & $\epsilon = 1e-10$  & $\epsilon = 1e-12$  & $\epsilon = 1e-15$ \\ \hline
Val ppl & 23.34\dgs{$\pm$ 0.31} & 21.56\dgs{$\pm$ 0.19}  & 21.86\dgs{$\pm$ 0.21} & 21.87\dgs{$\pm$ 0.04} & 21.89\dgs{$\pm$ 0.2}& 21.91\dgs{$\pm$ 0.18} \\ \hline
\end{tabular}}
\label{tab:ablation_eps}
\end{table}

\section{New Viewpoints of \Adam}
\label{sec:theory}

We now show that restricting to the case $\beta_1 = \beta_2 = \beta$ yields a useful interpretation of \Adam{}. Since the \Adam{} update is coordinate-wise, it suffices to analyze a single scalar gradient $g_k \in \mathbb{R}$. Moreover, ablations (Table~\ref{tab:ablation_eps}, Table~\ref{tab:ablation_ZI_BC}) indicate that neither the $\epsilon$-term nor the bias correction significantly affect performance. Thus, for clarity, we set $\epsilon = 0$ and study the simplified \Adam{} update:
\begin{equation} \label{eq:adam-1d}
    d_k = 
    \frac{\EMA_{\beta}[g_k]}{\sqrt{\EMA_{\beta}[g_k^2]}}.
\end{equation}

We next rewrite~(proof in the Appendix) the update to explicitly highlight the role of variance.
\begin{restatable}{proposition}{propadammol}\label{prop:adammol}
Let $m_k = \EMA_{\beta}[g_k]$. Then the update~\eqref{eq:adam-1d} admits the equivalent representation:
\begin{equation}\label{eq:adam-denom-msq-var}
    d_k \;=\; {\color{customgreen2} \frac{m_k}{\sqrt{m_k^2 + {\color{customred2}\beta \, \EMA_\beta[(m_{k-1} - g_k)^2]}}}}.
\end{equation}
\end{restatable}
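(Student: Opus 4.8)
The plan is to expand the EMA in the denominator of \eqref{eq:adam-1d} and show it equals the expression under the square root in \eqref{eq:adam-denom-msq-var}. Concretely, I would start from the definition $\EMA_\beta[g_k^2] = \beta\,\EMA_\beta[g_{k-1}^2] + (1-\beta) g_k^2$, but the cleaner route is to manipulate $m_k$ directly. Since $m_k = \beta m_{k-1} + (1-\beta) g_k$, we have the identity $m_{k-1} - g_k = \tfrac{1}{\beta}(m_{k-1} - m_k)$ (rearranging $m_k - m_{k-1} = (1-\beta)(g_k - m_{k-1})$). The target claim is therefore equivalent to showing
\begin{equation}\label{eq:target-identity}
    \EMA_\beta[g_k^2] \;=\; m_k^2 + \beta\,\EMA_\beta[(m_{k-1} - g_k)^2].
\end{equation}
So the whole proposition reduces to verifying this one scalar identity between exponential moving averages.

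To prove \eqref{eq:target-identity} I would argue by induction on $k$ (with the base case handled by the chosen initialization, e.g. $\EMA_\beta[g_0^2] = g_0^2$, $m_0 = g_0$, and the $k{-}1$ term vacuous/zero, so both sides equal $g_0^2$). For the inductive step, write $v_k := \EMA_\beta[g_k^2] = \beta v_{k-1} + (1-\beta) g_k^2$ and, using the inductive hypothesis $v_{k-1} = m_{k-1}^2 + \beta\,\EMA_\beta[(m_{k-2}-g_{k-1})^2]$, substitute and try to collapse the right-hand side to $m_k^2 + \beta\,\EMA_\beta[(m_{k-1}-g_k)^2]$. The key algebraic fact to exploit is the ``variance decomposition'' for a convex combination: for weights $\beta, 1-\beta$ and points $m_{k-1}, g_k$ with mean $m_k = \beta m_{k-1} + (1-\beta) g_k$, one has
\begin{equation}\label{eq:var-decomp}
    \beta m_{k-1}^2 + (1-\beta) g_k^2 \;=\; m_k^2 + \beta(1-\beta)(m_{k-1}-g_k)^2.
\end{equation}
This is the scalar analogue of $\mathbb{E}[X^2] = (\mathbb{E}X)^2 + \mathrm{Var}(X)$ for a two-point distribution, and it is an elementary expansion. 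Combining \eqref{eq:var-decomp} with the recursions for $v_k$ and for $\EMA_\beta[(m_{k-1}-g_k)^2]$ should make everything match term by term.

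Actually, a slicker non-inductive version is worth trying first: define $u_k := \EMA_\beta[(m_{k-1}-g_k)^2]$ and check directly that $m_k^2 + \beta u_k$ satisfies the same one-step recursion as $v_k$, namely $X_k = \beta X_{k-1} + (1-\beta) g_k^2$, and the same initial condition; then $X_k = v_k$ for all $k$ by uniqueness of the recursion. Verifying the recursion for $m_k^2 + \beta u_k$ amounts to expanding $m_k^2 = (\beta m_{k-1} + (1-\beta)g_k)^2$ and $\beta u_k = \beta^2 u_{k-1} + \beta(1-\beta)(m_{k-1}-g_k)^2$, then using \eqref{eq:var-decomp} to rewrite $\beta m_{k-1}^2 + (1-\beta) g_k^2$; the cross terms and the $\beta(1-\beta)(m_{k-1}-g_k)^2$ pieces should cancel against each other, leaving $\beta(m_{k-1}^2 + \beta u_{k-1}) + (1-\beta) g_k^2 = \beta X_{k-1} + (1-\beta)g_k^2$.

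The main obstacle is purely bookkeeping: making sure the coefficients of the $(m_{k-1}-g_k)^2$ term line up (one factor of $\beta$ comes from the EMA weight on the lagged buffer, another $\beta(1-\beta)$ from the variance-decomposition identity), and handling the initialization consistently so the base case is clean — in particular being careful about whether $\EMA_\beta$ is seeded at $g_0$ or at zero, since \eqref{eq:ema} allows both and the ``$\beta\,\EMA_\beta[\cdot]$'' prefactor in \eqref{eq:adam-denom-msq-var} is exactly what absorbs the off-by-one indexing. No step is deep; the risk is an arithmetic slip in the expansion.
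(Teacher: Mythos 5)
Your proposal is correct and is essentially the paper's own argument: both reduce the claim to showing that $\delta_k := \EMA_\beta[g_k^2] - m_k^2$ satisfies the recursion $\delta_{k+1} = \beta\,\delta_k + \beta(1-\beta)(m_k - g_{k+1})^2$ and hence equals $\beta\,\EMA_\beta[(m_{k-1}-g_k)^2]$. The only cosmetic difference is that you package the key algebraic step as the two-point variance decomposition $\beta m_{k-1}^2 + (1-\beta)g_k^2 = m_k^2 + \beta(1-\beta)(m_{k-1}-g_k)^2$, whereas the paper performs the same expansion inline by adding and subtracting $\beta m_k^2$.
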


This shows that the denominator depends on the exponential moving average of the squared deviation between the momentum $m_{k-1}$ and the incoming gradients $g_k$, with an \textbf{interesting multiplier} $\beta$. As we demonstrate in the next section, this quantity is in fact an online estimator of the gradient variance.

\subsection{\Adam{}  Estimates Mean and Variance using  Variational Inference }
\label{sec:estimation}

We show that \Adam{} admits a natural interpretation as an online variational inference method, where
\[
m_k := \EMA_{\beta}[g_k] \quad \text{and} \quad \sigma_k^2 := \beta\, \EMA_{\beta}[(m_{k-1}-g_k)^2]
\]
correspond to online estimates of the mean and variance of the stochastic gradients. We reintroduce \Adam{} through this lens.

Suppose we are given a sequence of stochastic gradients $\{g_1, \ldots, g_k\}$, where each $g_k$ is sampled from an unknown Gaussian distribution whose mean and variance may vary with $k$. Rather than taking steps directly along these noisy gradients, we aim to estimate their mean and variance online and use these estimates to define a more informed search direction.

At iteration $k$, let $(m_k, \var_k)$ denote our current estimates of the gradient mean and variance, respectively. Upon receiving a new gradient sample $g_{k+1} \sim \mathcal{N}(m, \var)$ with unknown $(m, \var)$, we wish to update our estimates to $(m_{k+1}, \var_{k+1})$ so that it becomes more \emph{likely} that $g_{k+1}$ was drawn from $\mathcal{N}(m_{k+1}, \var_{k+1})$. Since we also expect the underlying distribution to vary slowly over time, we prefer that $\mathcal{N}(m_{k+1}, \var_{k+1})$ remain close to the previous estimate $\mathcal{N}(m_k, \var_k)$. These two goals—fitting the new observation and ensuring smooth updates—can be traded off via a regularized maximum likelihood problem:
\begin{equation} \label{eq:log-like}
    \min_{m, \var \geq 0} \; - \log p(g_{k+1} \mid m, \var) + \, \tfrac{1}{\lambda }\mathrm{KL}\left(\mathcal{N}(m_k, \var_k)\,\|\,\mathcal{N}(m, \var)\right),
\end{equation}
where $p(g_{k+1} \mid m, \var)$ is the Gaussian likelihood, $\lambda \geq 0$ is a regularization parameter, and $\mathrm{KL}$ denotes the Kullback--Leibler divergence:
\begin{align}
    -\log p(g_{k+1} \mid m, \var) &= \frac{1}{2} \log \var + \frac{1}{2\var}(g_{k+1} - m)^2, \\
    \mathrm{KL}\left(\mathcal{N}(m_k, \var_k)\,\|\,\mathcal{N}(m, \var)\right) &=
    \frac{1}{2} \left[ \frac{\var_k}{\var} + \frac{(m_k - m)^2}{\var} - 1 - \log\left( \frac{\var_k}{\var} \right) \right]. \label{eq:logp-KL}
\end{align}

The following result, proved in the appendix, characterizes the solution of \eqref{eq:log-like}, showing that the moving averages used in \Adam{} correspond exactly to an instance of online variational inference:

\begin{restatable}{theorem}{theoVIAdam} \label{thm:theoVIAdam}
Let $\beta = \frac{1}{1 + \lambda}$. Then the solution to the optimization problem~\eqref{eq:log-like} is given by
\begin{align}
    m_{k+1} &= \beta m_k + (1 - \beta) g_{k+1} = \EMA_\beta[g_{k+1}], \label{eq:mom-VI-view} \\
    \var_{k+1} &= \beta \var_k + \beta(1 - \beta)(m_k - g_{k+1})^2 = \beta\, \EMA_\beta\left[(m_k - g_{k+1})^2\right]. \label{eq:varVI-view}
\end{align}
\end{restatable}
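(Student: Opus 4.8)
The plan is a straightforward two-stage minimization of the objective in \eqref{eq:log-like}. First I would substitute the closed forms \eqref{eq:logp-KL} for $-\log p(g_{k+1}\mid m,\sigma^2)$ and for the KL term, discard the additive constants that do not depend on $(m,\sigma^2)$ (the $-\tfrac{\lambda}{2}$ and $-\tfrac{\lambda}{2}\log\sigma^2_k$ pieces), and collect the surviving terms over a common $\tfrac{1}{\sigma^2}$ and a common $\log\sigma^2$. Writing $v=\sigma^2$ and $g=g_{k+1}$, this puts the objective, up to an additive constant, in the form
\[
F(m,v)\;=\;\frac{1+\lambda}{2}\,\log v\;+\;\frac{1}{2v}\Bigl[(g-m)^2+\lambda(m_k-m)^2+\lambda\sigma^2_k\Bigr].
\]

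The structural observation is that the minimization over $m$ decouples from $v$: for every fixed $v>0$ the prefactor $\tfrac{1}{2v}$ is a positive constant, so $\arg\min_m F(m,v)$ is the minimizer of the strictly convex quadratic $q(m)=(g-m)^2+\lambda(m_k-m)^2$, namely $m^\star=\tfrac{g+\lambda m_k}{1+\lambda}$, and this is independent of $v$. Hence $\min_{m,v}F=\min_{v>0}F(m^\star,v)$, so it remains to substitute $m^\star$ back and solve a one-dimensional problem in $v$; this also makes the constraint $\sigma^2\ge0$ inactive as soon as the $v$-minimizer is positive. Rewriting $m^\star$ via the correspondence between $\beta$ and $\lambda$ gives $m^\star=\beta m_k+(1-\beta)g_{k+1}=\EMA_\beta(g_{k+1})$, which is \eqref{eq:mom-VI-view}.

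For the variance, I would plug $m^\star$ into $F$ using the two identities $g-m^\star=\tfrac{\lambda}{1+\lambda}(g-m_k)$ and $m_k-m^\star=\tfrac{1}{1+\lambda}(m_k-g)$. The bracketed numerator then becomes $\bigl[(\tfrac{\lambda}{1+\lambda})^2+\lambda(\tfrac{1}{1+\lambda})^2\bigr](m_k-g)^2+\lambda\sigma^2_k$, and the crucial cancellation $(\tfrac{\lambda}{1+\lambda})^2+\lambda(\tfrac{1}{1+\lambda})^2=\tfrac{\lambda}{1+\lambda}$ collapses it to $\tfrac{\lambda}{1+\lambda}(m_k-g)^2+\lambda\sigma^2_k$. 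The reduced objective in $v$ is $a\log v+b/v$ with $a=\tfrac{1+\lambda}{2}>0$ and $b=\tfrac{1}{2}\bigl[\tfrac{\lambda}{1+\lambda}(m_k-g)^2+\lambda\sigma^2_k\bigr]>0$; it is coercive on $(0,\infty)$ (it tends to $+\infty$ as $v\to0^+$ because $b/v$ dominates, and as $v\to\infty$ because $a\log v$ does), so its unique stationary point $v^\star=b/a=\tfrac{\lambda}{(1+\lambda)^2}(m_k-g)^2+\tfrac{\lambda}{1+\lambda}\sigma^2_k$ is the global minimizer and is strictly positive. Under the correspondence between $\beta$ and $\lambda$ — for which $\tfrac{\lambda}{1+\lambda}=\beta$ and hence $\tfrac{\lambda}{(1+\lambda)^2}=\beta(1-\beta)$ — this reads $v^\star=\beta\sigma^2_k+\beta(1-\beta)(m_k-g_{k+1})^2$, i.e. \eqref{eq:varVI-view}; recognizing the right-hand side as $\beta\,\EMA_\beta\bigl((m_k-g_{k+1})^2\bigr)$ under the recursive convention that matches Proposition~\ref{prop:adammol} finishes the proof.

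The computations are elementary, so I expect the main friction to be bookkeeping: keeping the dictionary between $\beta$ and $\lambda$ consistent so that both the mean and variance updates land exactly in the paper's $\EMA_\beta$ form. The two points that require genuine care are (i) justifying $\min_{m,v}F=\min_v\min_m F$ together with the fact that the inner minimizer $m^\star$ does not depend on $v$ — useful precisely because $F$ is \emph{not} jointly convex in $(m,\sigma^2)$, the $\log\sigma^2$ term being concave, so one cannot simply invoke joint convexity — and (ii) the coercivity argument certifying that the unique stationary point of the one-dimensional $v$-problem is a genuine minimum attained at $\sigma^2>0$.
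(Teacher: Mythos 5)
Your route is the same as the paper's --- stationarity in $m$, substitute the minimizer back, then solve the one--dimensional problem in $\var$ --- and your added care is a genuine improvement: the paper's proof only checks first--order conditions, whereas you justify $\min_{m,v}F=\min_v\min_m F$ via the $v$--independence of the inner minimizer and certify the global minimum in $v$ by coercivity of $a\log v+b/v$ on $(0,\infty)$. Your algebra (the identities for $g-m^\star$ and $m_k-m^\star$, the cancellation $(\tfrac{\lambda}{1+\lambda})^2+\lambda(\tfrac{1}{1+\lambda})^2=\tfrac{\lambda}{1+\lambda}$, and $v^\star=b/a$) is correct for the objective you wrote down.

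The one substantive issue is the $\beta$--$\lambda$ dictionary, which you flagged as the friction point and then resolved in a way that contradicts the theorem's hypothesis. Minimizing \eqref{eq:log-like} exactly as written (likelihood weight $1$, KL weight $\lambda$) gives $m^\star=\tfrac{g_{k+1}+\lambda m_k}{1+\lambda}=\tfrac{\lambda}{1+\lambda}m_k+\tfrac{1}{1+\lambda}g_{k+1}$, as you found; this is $\EMA_\beta$ only for $\beta=\tfrac{\lambda}{1+\lambda}$, whereas the theorem stipulates $\beta=\tfrac{1}{1+\lambda}$. The paper's proof lands on the stated dictionary, but only because the $F$ it writes down corresponds to $-\log p+\tfrac{1}{\lambda}\mathrm{KL}$ rather than to \eqref{eq:log-like}: with KL weight $\tfrac1\lambda$ one gets $m^\star=\tfrac{m_k+\lambda g_{k+1}}{1+\lambda}$ and hence $\beta=\tfrac{1}{1+\lambda}$. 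In short, \eqref{eq:log-like} and the theorem statement disagree by the reparametrization $\lambda\mapsto 1/\lambda$; your proof is faithful to \eqref{eq:log-like} and therefore establishes the theorem with ``$\beta=\tfrac{\lambda}{1+\lambda}$'' in place of ``$\beta=\tfrac{1}{1+\lambda}$'', while the paper's proof is faithful to the stated $\beta$ but not to \eqref{eq:log-like}. Since both maps send $(0,\infty)$ bijectively onto $(0,1)$, the substantive conclusion --- that the optimal $(m_{k+1},\var_{k+1})$ are exactly the coupled EMA recursions \eqref{eq:mom-VI-view}--\eqref{eq:varVI-view} for some $\beta\in(0,1)$ determined by $\lambda$ --- survives either way; but to match the statement verbatim you must either replace $\lambda$ by $1/\lambda$ in \eqref{eq:log-like} before starting, or restate the hypothesis as $\beta=\tfrac{\lambda}{1+\lambda}$.
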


As a consequence, the \Adam{} update direction in~\eqref{eq:adam-denom-msq-var} can be rewritten as
\begin{equation} \label{eq:Adam-var-view}
    d_k \;=\; \frac{\color{customgreen2}  m_k}{\sqrt{{\color{customgreen2} m_k^2} +{\color{customred2} \beta \EMA{_\beta}[(m_{k-1} - g_k)^2]}}} \; =\; \frac{\color{customgreen2}  m_k}{\sqrt{{\color{customgreen2} m_k^2} + {\color{customred2} \var_k}}} = \frac{\operatorname{sign}({\color{customgreen2} m_k})}{\sqrt{1 + {\color{customred2} \sigma_k^2} /{\color{customgreen2}  m_k^2}}}.
\end{equation}
This shows that \Adam{} may be interpreted as an \emph{adaptive mollified} variant of \Signum{}, where the mollification depends on the local noise-to-signal ratio. This mollified viewpoint
 aligns well with one of the first papers on understanding \Adam{}~\citep{balles2018dissecting}, as discussed after Proposition~\ref{prop:adammol}. 
 
Using these insights, we can better formalize the \emph{noise-to-signal} interpretation of \Adam{}~\citep{balles2018dissecting}~(see also~\S\ref{sec:balles}). Let $m_k/\sigma_k$ denote the signal-to-noise ratio (SNR). We show that \Adam{} can be viewed as a steepest descent method whose trust region is modulated by the SNR.

To build this connection, consider first the \Signum{} update. It corresponds to the steepest descent direction under an $\ell_\infty$-norm constraint~\citep{balles2018dissecting}, solving
\begin{equation}\label{eq:abs-descent}
-\mbox{sign}(m_k) \; = \; \argmin{\theta \in \R} -m_k \cdot \theta \quad \text{subject to } |\theta | \leq 1.
\end{equation}
That is, \Signum{} selects the direction most aligned with $-m_k$ within a unit trust region.

In contrast, \Adam{} can be interpreted as a steepest descent method with a variable trust region, defined by the (inverse) signal-to-noise ratio:
\begin{equation}\label{eq:adam-descent}
-\frac{\mbox{sign}(m_k)}{\sqrt{1 + \sigma_k^2 / m_k^2}} \; = \; \argmin{\theta \in \R} -m_k \cdot \theta \quad \text{subject to } |\theta | \leq \frac{1}{\sqrt{1 + \sigma_k^2 / m_k^2}}.
\end{equation}
Here, the effective step size shrinks when the noise dominates the signal, and expands toward $1$ as uncertainty decreases. In this sense, \Adam{} adapts its update magnitude according to a confidence-weighted trust region.

\subsection{Comparison with Balles and Hennig [2018]} 
\label{sec:balles}
 \citet{balles2018dissecting} first drew a connection between \Adam{}, \Signum{} and Signal-to-noise Ratio regularization.
 Their observation was as follows. Let $m_k = \EMA_{\beta_1}[g_k]$, and  $v_k = \EMA_{\beta_2}[g_k^2]$.
We can trivially re-write the \Adam{} direction as
$$d_k = \frac{m_k}{\sqrt{v_k}} = {\color{customgreen2}\frac{m_k}{\sqrt{m_k^2 + {\color{customred2}v_k- m_k^2}}}}.$$
If we now \emph{assume} that  $\sigma_k^2 := v_k- m_k^2$ is a measure of variance, then dividing the \Adam{} direction through by $|{\color{customgreen2} m_k}|$, as done in~\eqref{eq:Adam-var-view}, we arrive at a Signal-to-noise Ratio regularized variant of the \Signum{} method. In particular, as the noise goes to zero ($\sigma_k^2 \rightarrow 0$), we arrive at the \Signum{} method.

The missing piece in their insight was to show when and if the term $v_k- m_k^2$ is a measure of variance.

We show that $\beta_1=\beta_2$, a choice that was not common\footnote{Default parameters have for long been $\beta_1=0.9$, $\beta_2=0.999$, see \url{ https://docs.pytorch.org/docs/stable/generated/torch.optim.Adam.html}.} at the time of~\citet{balles2018dissecting}, allows for more precise claims: Proposition~\ref{prop:adammol} shows that when $\beta_1=\beta_2=\beta$ the term $v_k- m_k^2$ is precisely equal to $\beta \EMA_\beta[(m_{k-1} - g_k)^2]$, which in turn is a online estimate of variance (Theorem~\ref{thm:theoVIAdam}).
 We further show that $v_k- m_k^2$ only has a precise variance interpretation for the case $\beta_1=\beta_2$: indeed, we prove in \S\ref{sec:specific_betas} that \Adam{} can be represented as
\begin{equation}
\label{eq:adam_rewrite}
d_k = \frac{m_k}{\sqrt{m_k^2 + \gamma \, \EMA_\tau[(a m_{k-1} - b g_k)^2]}}
\end{equation}
for some $a, b, \gamma \in \mathbb{R}$ and $\tau \in (0,1)$ \emph{if and only if} $\beta_1 = \beta_2$. In other words, connecting $v_k- m_k^2$ to variance estimation, and in turn \Adam{} to an SNR-controlled trust region method~\eqref{eq:adam-descent}, can only be done precisely for the case of equal betas.

\paragraph{Ablating hyperparameters in our reformulation.} While~\eqref{eq:adam_rewrite} reduces to Adam with equal betas if and only if $a,b=1$ and $\beta=\gamma=\tau$, we found it interesting to consider~\eqref{eq:adam_rewrite}, with $a=b=1$, as a new method with no precise connection to simultaneous variance and mean estimation, with hyperparameters $\beta,\gamma,\tau$. In \S\ref{sec:adavar}, we train 150 additional language models ablating on such parameters, and found no advantage in setting $\beta\ne \tau$ or $\tau\ne\gamma$. We believe such evidence further strengthens our claims: best performance is aligned to the theoretical choice $\tau=\gamma=\beta$.

\section{Why an adaptive trust region? Insights from heterogeneous quadratics}
\label{sec:quadratic}

\begin{figure}[ht]
  \vspace{-1mm}
\centering
  \includegraphics[width=\linewidth]{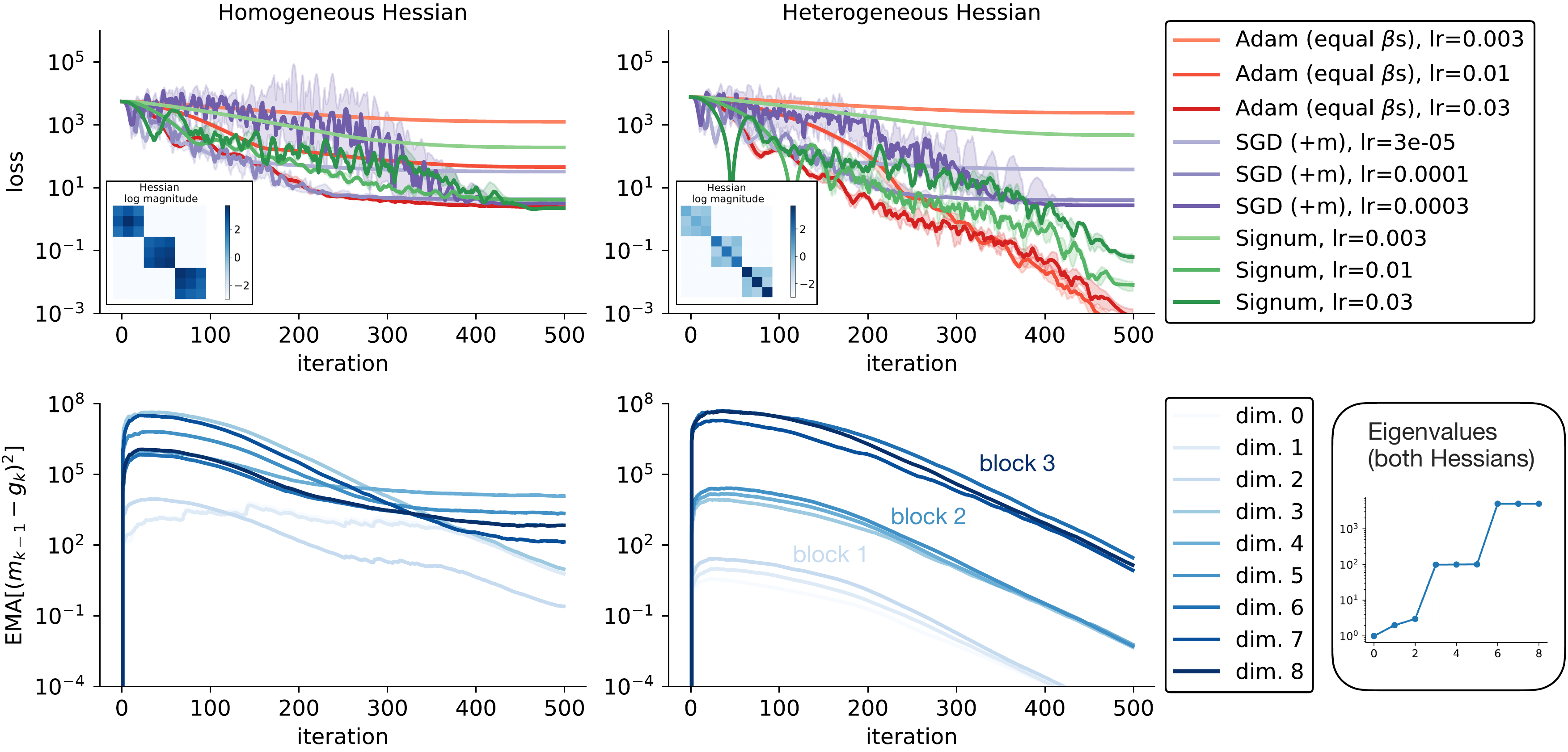}
  \caption{\small \textit{
\textbf{Top row:} Training performance (median and 25\%/75\% quantiles over 10 seeds) of \SGD{}, \Signum{}, and \Adam{} on two 9-dimensional convex quadratic problems (\S\ref{app:quadratic_details}) inspired by~\citet{zhang2024why}. All optimizers use moving average parameters set to $0.95$, with a 10\% warmup followed by cosine decay to zero. Both problems share the same Hessian eigenspectrum and have a $3 \times 3$ block structure. The landscape on the \emph{left} is \emph{homogeneous}, with each block containing both large and small eigenvalues. The landscape on the \emph{right} is \emph{heterogeneous}, with each block having eigenvalues of different magnitudes. In this setting, \Adam{} clearly outperforms \SGD{}, with \Signum{} closing part of the gap.
\textbf{Bottom row:} Dynamics of the variance term in Proposition~\ref{prop:adammol}. The value of this term varies both across iterations and across blocks, adapting to the local curvature structure. This adaptive behavior improves performance over \Signum{} in the heterogeneous setting.
  }}
  \label{fig:quadratic}
  \vspace{-3mm}
\end{figure}

While our theoretical analysis in \S\ref{sec:theory} offers a new perspective on \Adam{}, it is not tied to any specific architecture. To enhance intuition and provide a controlled setting for future work, we validate our findings on a simplified model of transformer loss landscapes introduced by~\citet{zhang2024why}, building on signal propagation theory~\citep{noci2022signal}.\\
As noted in~\citet{zhang2024why, kunstner2024heavy, zhao2025deconstructing}, the landscape of autoregressive language models is highly heterogeneous: Hessian blocks associated with semantically distinct parameter groups (e.g., normalization layers, embeddings, or softmax-related parameters) exhibit markedly different eigenspectra and thus demand different learning rates. In contrast to homogeneous models (e.g., CNNs), this heterogeneity is where \Adam{} significantly outperforms \SGD{}~\citep[cf.][]{zucchet2024recurrent}.

Figure~\ref{fig:quadratic} illustrates this point. On a toy heterogeneous quadratic landscape, tuned \Adam{} with equal $\beta$ values substantially outperforms tuned \SGD{} with momentum, echoing results from~\citet{zhang2024why}. We also observe that \Signum{} closes much of the gap but still falls short of \Adam{}. This is consistent with our findings in Table~\ref{tab:algorithm-performance} for language models.

\begin{wrapfigure}[14]{R}{0.40\textwidth}
\vspace{-6mm}
\centering
  \includegraphics[width=0.39\textwidth]{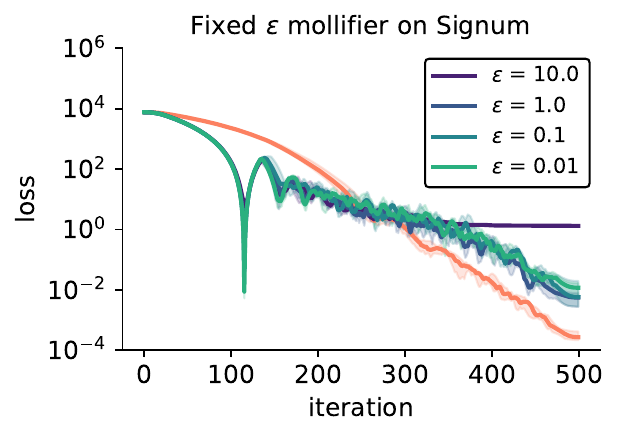}
  \vspace{-4mm}
  \caption{\small  \textit{Counterpart of Figure~\ref{fig:epsilon-signum} for the heterogeneous quadratic example. We do not observe gains with a fixed mollifier $m_k/\sqrt{m_k^2+\epsilon}$. Placing inside or outside $\sqrt{\cdot}$ has no qualitative effect after tuning.}}
  \label{fig:epsilon-signum-quad}
\end{wrapfigure}
In Proposition~\ref{prop:adammol}, we showed that the key difference between \Signum{} and \Adam{} lies in the variance correction term $\beta \texttt{EMA}_\beta[(m_{k-1} - g_k)^2]$ in the denominator. Understanding how this term evolves is essential: it cannot be approximated by a constant. In the second row of Figure~\ref{fig:quadratic}, we observe that the variance estimate not only varies over time, but also differs in scale across the three blocks—mimicking the parameter groupings in transformer models. This block-wise variation reinforces the idea that the variance term dynamically adapts to the local curvature and cannot be substituted by a fixed value. In Figure~\ref{fig:epsilon-signum-quad} and \ref{fig:epsilon-signum}, we show a similar effect in heterogeneous quadratic and language models, respectively: replacing $\beta \texttt{EMA}_\beta[(m_{k-1} - g_k)^2]$ with a fixed constant $\epsilon$ cannot provide the same adaptive effect.

\section{Conclusion}
\label{sec:conclusion}


We have presented an extensive numerical study of \Adam{}, comparing it against several proposed simplifications. Our main finding is that, on generative language modeling tasks, \Adam{} significantly outperforms these simplified variants. Notably, we observe that setting $\beta_1 = \beta_2$ is often optimal or near-optimal. Based on this observation, we recommend \Adam{} with $\beta_1 = \beta_2$ as a simplified model, and we provide a new variational inference interpretation for this setting.

Our findings come with some limitations. First, our numerical experiments fix a grid over the hyperparameters; the results are therefore sensitive to the choice of grid, and different grids may lead to different conclusions. However, for all our hyperparameters, we show explicitly all tuning curves demonstrating that we are always at optimality inside the grid~(and not at the edge). Second, while $\beta_1 = \beta_2$ often performs well, we note that at small batch sizes, Figure~\ref{fig:correlation} suggests a slight shift. Finally, although Theorem~\ref{thm:theoVIAdam} shows that \Adam{}’s two momentum buffers can be interpreted as online estimates of the gradient’s mean and variance, it does not explain why these estimates should be arranged into the quotient used in \Adam{}~\eqref{eq:Adam-var-view}. Lemma~1 in~\citep{balles2018dissecting} can provide a starting point to further dissect this interesting choice and explore alternatives.

\section*{Acknowledgements}
We would like to thank Niccolo Ajroldi, Sam Liang, Weronika Ormaniec, and Enea Monzio Compagnoni for their comments. We additionally thank the NeurIPS 2025 and ICML 2025 HiLD workshop reviewers for their valuable feedback and references. Antonio Orvieto acknowledges the financial support of the Hector Foundation, and is thankful for the compute made available by MPI-IS and the Tübingen AI ecosystem.

\newpage

\bibliography{main}

\newpage

\appendix

\tableofcontents 
\section{Experimental details}
\label{app:exp-details}

For pre-training Transformers on Causal Language Modeling, we build upon the nanoGPT~\citep{karpathy2022nanogpt} implementation,
augmenting it with Rotational Positional Embedding~\citep{su2024roformer}, RMSNorm~\citep{zhang2019root}, and SwiGLU~\citep{shazeer2020glu}. All our models have a vocabulary size of 50280 and make use of GPT-Neox tokenizer~\citep{black2022gpt}. We adopt an enhanced training recipe, made popular by large language models such as LLaMa~\citep{touvron2023llama}. These modifications include:
training in bfloat16; employing a linear learning rate warm-up for 10\% of the training steps, followed by cosine annealing
to $1e-5$. Global norm clipping is used~(unless specified or ablated upon) for gradients with norms above
1~(on the raw gradient, as a first step). We have no weight tying between the embedding and the last linear layer. We always report validation perplexity on a separate subset 
consisting of 100M tokens. Seeds, when provided, are relative to distinct network initialization.

\paragraph{Computational Resources.} All our experiments at a 160M parameter scale are performed on a single NVIDIA A100-SXM4-80GB. At compute optimality~(most of our experiments) each run takes approximately 5.83 hours. Our runs at a 410M parameter scale are performed on 8 NVIDIA A100-SXM4-80GB GPUs, and each run here takes approximately 4.83 hours. For all our runs, we fill up memory and optimize to minimize the gradient accumulation steps~(usually, around 8). 

\paragraph{Code.} All our runs use the repository
\begin{center}
    \url{https://github.com/Niccolo-Ajroldi/plainLM}
\end{center}

\paragraph{Model settings (160M).} We use the same configuration as~\citep{biderman2023pythia}: \url{https://github.com/EleutherAI/pythia/blob/main/models/160M/pythia-160m.yml}

\begin{itemize}
    \item \textit{Layers:} 12 Transformer~\citep{vaswani2017attention} layers
    \item \textit{Attention heads:} 12
    \item \textit{Hidden size:} 768
    \item \textit{Attention implementation:} Flashattention~\citep{dao2022flashattention}.
    \item \textit{MLP type:} SwiGLU~\citep{shazeer2020glu} with expansion factor 8/3.
    \item \textit{Backbone:} PreLN transformer~\citep{xiong2020layer} with skip connections.
    \item \textit{Normalization:} RMSnorm~\citep{zhang2019root} for both Attention and MLP.
    \item \textit{Position embeddings:} Rotary embeddings (RoPE) to 25\% of dimensions~(\citep{su2024roformer})
    \item \textit{Initialization:} the MLP and Attention output weights are initialized with variance $0.02/\sqrt{2 \# \text{layers}}$~(scaling also similar to~\citep{radford2019language}). All other weights~(comprising embeddings) are initialized with a standard deviation of $0.02$~(\citet{nguyen2019transformers,wang2022gpt}, Sec. 2.2). Biases are always initialized at zero.
    \item \textit{Precision:} Mixed precision FP16 enabled.
    \item \textit{Dropout:} Disabled for both hidden and attention layers~(see also~\citet{chowdhery2023palm}).
\end{itemize}

\paragraph{Model settings (410 M).} We use the same setting as~\citep{biderman2023pythia}, configuration can be found here: \url{https://github.com/EleutherAI/pythia/blob/main/models/410M/pythia-410m-deduped.yml}

\begin{itemize}
    \item \textit{Layers:} 24 Transformer layers
    \item \textit{Attention heads:} 16
    \item \textit{Hidden size:} 1024
    \item Other settings as 160M parameters.
\end{itemize}

\subsection{Experiments on {\color{magenta}SlimPajama -- 160M} parameters model}
\label{app:setting_160_sp}
On the Cerebras SlimPajama-627B~\citep{cerebras2023slimpajama} dataset: \url{https://huggingface.co/datasets/cerebras/SlimPajama-627B} at a 160M scale we present three experimental sections:
\begin{itemize}
    \item Section~\ref{app:setting_160_sp_2048} -- core setting, ablating on \textbf{all optimizers}.
    \item Section~\ref{app:setting_160_sp_512} -- ablating on a \textbf{smaller sequence length}.
    \item Section~\ref{app:setting_160_sp_2048_batch} -- ablating at \textbf{different batch sizes}.    
\end{itemize}

\subsubsection{Sequence Length 2048, Batch size 256, 3.2 B Tokens ~(6200 gradient steps)}
\label{app:setting_160_sp_2048}

This setup comprises a total of 747 full training runs. We always use warm-up~(10\%) and cosine anneal until a learning rate of $1e-5$. This setting is Chinchilla-optimal~(20 tokens/parameter).

$\lambda$ here denotes the weight decay, always decoupled~\citep{loshchilov2018decoupled}.

\paragraph{Core experiments:} These are the core experimental results for this setting.
\begin{itemize}
    \item \textbf{SGD} with momentum $\beta$ and \textbf{global norm clipping} to $1$ (Gclip, dampening to $1-\beta$)\\ --- \textit{84 full runs}~(Figure~\ref{fig:160_sgd_clip}, top).
    \begin{align*}
        (\eta,\beta,\lambda)&\in[2.0, 1.0, 0.5, 0.25, 0.125, 0.0625, 0.03125]\\  & \quad \times[0.9875, 0.975, 0.95, 0.9]\\ &\quad \times [0, 1e-3, 1e-4].
    \end{align*}

    \item \textbf{SGD} with momentum $\beta$ with (1) \textbf{global norm clipping} of raw gradient to $1$ (Gclip) and (2) \textbf{coordinate clipping} (Cclip) to $1$ after momentum is applied. Dampening is set to $1-\beta$, $\lambda$~(weight decay) is set to 0, as the previous point revealed decreasing performance on SGD\\ --- \textit{24 full runs}~(Figure~\ref{fig:160_sgd_clip}, bottom).
    \begin{align*}
        (\eta,\beta,\lambda)&\in[2.0, 1.0, 0.5, 0.25, 0.125, 0.0625]\\  & \quad \times[0.9875, 0.975, 0.95, 0.9]
    \end{align*}
    \item \textbf{SGD} with momentum $\beta$~(vanilla, dampening to $1-\beta$, \textbf{no clipping}). $\lambda= 0$ (weight decay).\\ --- \textit{28 full runs}~(Figure~\ref{fig:160_sgd})
    \begin{align*}
        (\eta,\beta)&\in[0.25, 0.125, 0.0625, 0.03125, 0.015625, 0.0078125, 0.00390625]\\  & \quad \times[0.9875, 0.975, 0.95, 0.9].
    \end{align*}

    \item \textbf{Adam} with global norm clipping to 1 and with $\lambda = 0.1$~(weight decay) and $\epsilon = 1e-8$ (usual Pytorch setup, see also~\citet{biderman2023pythia}).\\ -- \textit{200 full runs}~(Figure~\ref{fig:big_sweep})
    \begin{align*}
        (\eta,\beta_1,\beta_2)&\in [0.016, 0.008, 0.004, 0.002, 0.001]\\& \quad \times[0.9875, 0.975, 0.95, 0.9, 0.8]\\& \quad \times[0.996875, 0.99375, 0.9875, 0.975, 0.95, 0.9, 0.8, 0.6]
    \end{align*}

    \item \textbf{Adam}\textbf{ without global norm clipping} and with $\lambda = 0.1$~(weight decay) and $\epsilon = 1e-8$ (usual Pytorch setup, see also~\citet{biderman2023pythia}).\\ -- \textit{165 full runs}~(Figure~\ref{fig:adam_no_clip})
    \begin{align*}
        (\eta,\beta_1,\beta_2)&\in [0.032, 0.016, 0.008, 0.004, 0.002, 0.001]\\& \quad \times[0.975, 0.95, 0.9, 0.8]\\& \quad \times[0.9875, 0.975, 0.95, 0.9, 0.8, 0.6]
    \end{align*}

    \item \textbf{RMSprop} implemented using the AdamW Pytorch class using $\beta_1=0$. We again use $\lambda = 0.1$~(weight decay) and $\epsilon = 1e-8$. \\
    -- \textit{48 full runs}~(Figure~\ref{fig:160_RMS}).
    \begin{align*}
        (\eta,\beta_2)&\in [0.004, 0.002, 0.001, 0.0005, 0.00025, 0.000125]\\& \quad \times[0.9875, 0.975, 0.95, 0.9, 0.8, 0.6, 0.4, 0.0]
    \end{align*}

    \item \textbf{Signum} with weight decay $\lambda=0.1$ as also suggested by~\citep{zhao2025deconstructing}~(their Figure 5, top left panel). We \textbf{ablate on presence of global norm gradient clipping}~(to norm 1).\\
    -- \textit{70 full runs}~(Figure~\ref{fig:big_sweep}).
    \begin{align*}
        (\eta,\beta, \text{clip})&\in [0.004, 0.002, 0.001, 0.0005, 0.00025, 0.000125, 0.0000625]\\& \quad \times[0.9875, 0.975, 0.95, 0.9, 0.8]\\& \quad \times[\text{True}, \text{False}]
    \end{align*}
    Note that Signum with and without gradient clipping are two different methods: here, clipped gradients are first averaged and only then the sign is taken. Instead, clipping on the EMA of signed gradients (next method) should have no effect (apart from non-determinism).

    \item \textbf{EMASign} with weight decay $\lambda=0.1$. We ablate on the presence of global norm gradient clipping~(to norm 1) \textit{out of mistake}: the two methods are equal!\\
    -- \textit{70 runs (35 duplicate runs)}~(Figure~\ref{fig:160_EMA_sign})
    \begin{align*}
        (\eta,\beta, \text{clip})&\in [0.001, 0.0005, 0.00025, 0.000125, 0.0000625, 0.00003125, 0.000015625]\\& \quad \times[0.9875, 0.975, 0.95, 0.9, 0.8]\\& \quad \times[\text{True}, \text{False}]
    \end{align*}
\end{itemize}

\paragraph{Ablations:} These ablations were performed to test side-claims in the paper.

\begin{itemize}
    \item \textbf{Adam} with global norm clipping to 1 and $\lambda=0.1, \beta_1=\beta_2=0.95, \eta = 0.008$~(best setup from Figure~\ref{fig:big_sweep}). We report performance for 3 seeds using different $\epsilon$ values.\\
    -- \textit{18 full runs}~(Table~\ref{tab:ablation_eps}).
    $$\epsilon\in [1e-3,1e-6, 1e-8, 1e-10, 1e-12, 1e-15],$$
    and influence of initializing exponential moving averages to zero~(default, ZI) or to the stochastic quantity of interest~(gradient initialization, GI). At the same time, we try to remove bias correction. These experiments are presented with 3 random initialization seeds:\\
    -- \textit{9 full runs}~(Table~\ref{tab:ablation_ZI_BC}).
    \item \textbf{Signum} with global norm clipping to 1 and $\lambda=0.1, \beta = 0.95$~(best setting from Figure~\ref{fig:big_sweep}): we ablate on fixed mollifiers for zero-initialized~(ZI) or gradient-initialized~(GI) momentum. The mollified we study is $m_k/(\sqrt{m_k}+\epsilon)$:
    \begin{align*}
        (\eta,\epsilon)&\in [0.001, 0.0005, 0.00025, 0.000125]\times[1e-3, 1e-6, 1e-9]
    \end{align*}    
    -- \textit{12 full runs}~(Table~\ref{tab:ablation_eps}).\\
    We additionally test the influence of ZI vs. GI with three random seeds at $\epsilon=0$.\\
    -- \textit{5 full runs}~(Table~\ref{tab:ablation_ZI_BC}).
\end{itemize}

\paragraph{Other:} for the best-performing variants of core experiments, we initialize the model with two other random seeds. This accounts for \\
-- \textit{14 additional full runs}~(Table~\ref{tab:algorithm-performance}).

\subsubsection{Sequence Length 2048, Batch size 256, 6.4 B Tokens ~(12400 gradient steps)}
\label{app:setting_160_sp_2048_double}

The setup here is exactly as in \S\ref{app:setting_160_sp_2048}, but we train for $2\times$ the token budget. We test our core claim~($\beta_1=\beta_2$ works well), and hence we run:

\begin{itemize}
    \item \textbf{Adam} with global norm clipping to 1 and with $\lambda = 0.1$~(weight decay) and $\epsilon = 1e-8$.\\ -- \textit{168 full runs}~(Figure~\ref{fig:big_sweep_2chinchilla})
    \begin{align*}
        (\eta,\beta_1,\beta_2)&\in [0.032, 0.016, 0.008, 0.004, 0.002, 0.001, 0.0005]\\& \quad \times[0.9875, 0.975, 0.95, 0.9]\\& \quad \times[0.99375, 0.9875, 0.975, 0.95, 0.9, 0.8]
    \end{align*}
\end{itemize}

\subsubsection{Sequence Length 512, Batch size 256, 3.2 B Tokens ~(24800 gradient steps)}
\label{app:setting_160_sp_512}
This setup comprises a total of 55 full training runs. We test our core claims~(Signum underperforms Adam, $\beta_1=\beta_2$ works well) at a smaller sequence length. Setting is exactly the same as \S\ref{app:setting_160_sp_2048} for all methods, unless stated otherwise.
\begin{itemize}
    \item \textbf{Adam}, we limit this ablation to $\beta_1 = 0.95$, 
    $$(\eta,\beta_2)\in [0.001, 0.002, 0.004, 0.008, 0.016]\times[0.99375, 0.9875, 0.975, 0.95, 0.9, 0.8]$$
    -- \textit{25 full runs}~(Figure~\ref{fig:adam_512}).

    \item \textbf{Signum}, we do a full ablation using global norm gradient clipping to 1.
    $$(\eta, \beta) \in [0.0000625, 0.000125, 0.00025, 0.0005, 0.001, 0.002]\times[0.9875, 0.975, 0.95, 0.9, 0.8]$$
    -- \textit{30 full runs}~(Figure~\ref{fig:adam_512}).

\end{itemize}

\subsubsection{Sequence Length 2048, Variable batch size, 2.5 B Tokens}
\label{app:setting_160_sp_2048_batch}
We use here a slightly reduced token budget~(2.5B, 20 tokens for every non-embedding parameter) and run the same Adam tuning experiment presented in Figure~\ref{fig:big_sweep} for batch size 256. We actually run this experiment again at a batch size of 256, and test batch sizes of 128 and 512 reducing or doubling the number of steps accordingly~(same token budget). The sequence length is still 2048, and the dataset SlimPajama. Due to the reduced number of tokens, each run takes approximately 4.7 hours on our hardware. We implement variation of batch size using gradient accumulation $(4,8,16)$ at a micro-batch size of 32 sequences. This setup comprises a total of 500 full training runs.

\textbf{Adam} with $\lambda = 0.1$~(weight decay) and $\epsilon = 1e-8$ (usual setup, see~\citet{biderman2023pythia}), we clip gradients to global norm 1.

\begin{itemize}[itemsep=0pt, leftmargin=*]
    \item For batch size 256:
    \begin{align*}
        (\eta,\beta_1,\beta_2)&\in [0.016, 0.008, 0.004, 0.002, 0.001]\\& \quad \times[0.9875, 0.975, 0.95, 0.9, 0.8]\\& \quad \times[0.996875, 0.99375, 0.9875, 0.975, 0.95, 0.9, 0.8, 0.6]
    \end{align*}
    \item For batch size 128 and 512:
    \begin{align*}
        (\eta,\beta_1,\beta_2)&\in [0.0005, 0.001, 0.0014, 0.002, 0.0028, 0.004, 0.0056, 0.008, 0.0112, 0.016]\\& \quad \times[0.975, 0.95, 0.9]\\& \quad \times 1-[4,2,1,0.5,0.25]\cdot(1-\beta_1) \qquad \text{(i.e. 3 higher and 2 lower values in grid)}
    \end{align*}
    Note that here we overturned the learning rate, the reason for this is the square root scaling law in~\citet{malladi2022sdes,compagnoni2025adaptive}: if batch size scales by 2, learning rate should scale as $\sqrt{2}$. We see in \S\ref{app:more_batches} that this indeed seems to hold true, yet noise prevents us from making precise verification claims.
\end{itemize}
 -- \textit{500 full runs}~(\S\ref{app:more_batches}).


\subsection{Experiments on {\color{magenta}SlimPajama -- 410M} parameters model, 8.2 B tokens}
\label{app:setting_410_sp}
All our experiments here use the Cerebras SlimPajama-627B~\citep{cerebras2023slimpajama} dataset: \url{https://huggingface.co/datasets/cerebras/SlimPajama-627B}. We focus on evaluating whether $\beta_1=\beta_2$ yields good performance in this settings. We scale up the batch size by a factor 2 compared to Section~\ref{app:setting_160_sp}, as suggested by~\citep{zhang2025how}. We perform our experiments at compute optimality~(8.2B tokens, 20 tokens per parameter).

\textbf{Adam} with $\lambda = 0.1$~(weight decay) and $\epsilon = 1e-8$ (usual setup, see~\citet{biderman2023pythia}), we clip gradients to global norm 1:

\begin{itemize}
    \item   $\beta_1 = 0.9$
    \begin{align*}
        (\eta, \beta_2)&\in [0.016, 0.008, 0.004, 0.002]\times[0.95, 0.9, 0.8]
    \end{align*}
    \item   $\beta_1 = 0.95$
    \begin{align*}
        (\eta, \beta_2)&\in [0.016, 0.008, 0.004, 0.002]\times[0.9875, 0.975, 0.95, 0.9]
    \end{align*}
    \item   $\beta_1 = 0.975$
    \begin{align*}
        (\eta, \beta_2)&\in [0.016, 0.008, 0.004, 0.002]\times[0.99375, 0.9875, 0.975, 0.95]
    \end{align*}
\end{itemize}
 -- \textit{44 full runs}~(Figure~\ref{fig:big_sweep410}).

\subsection{Experiments on {\color{magenta}Fineweb -- 160M} parameters model, 3.2B tokens -- no weight decay}
\label{app:setting_160_fw}

While testing our claims on a different dataset, we also crucially \textit{remove weight decay} here. Our setting is otherwise identical to that of \S\ref{app:setting_160_sp_2048}: Sequence length is 2048, batch size is 256, model has 160 parameters and we train on 3.2B tokens from Fineweb~\citep{penedo2024the}~\url{https://huggingface.co/datasets/HuggingFaceFW/fineweb}.

\begin{itemize}
    \item \textbf{Adam} with $\lambda = 0$~(no weight decay!) and $\epsilon = 1e-8$ (usual setup, see~\citet{biderman2023pythia}). We clip gradients to global norm 1.\\ 
    \begin{align*}
        (\eta,\beta_1,\beta_2)&\in [0.032, 0.016, 0.008, 0.004, 0.002, 0.001]\\& \quad \times[0.975, 0.95, 0.9]\\& \quad \times[0.9875, 0.975, 0.95, 0.9, 0.8]
    \end{align*}
    -- \textit{90 full runs}~(Figure~\ref{fig:160_fineweb})

    \item \textbf{Signum} with $\lambda=0$ (no weight decay) as also suggested by~\citep{zhao2025deconstructing}~(Figure 5, top left panel). We clip gradients to global norm 1.
    \begin{align*}
        (\eta,\beta)&\in [0.004, 0.002, 0.001, 0.0005, 0.0000625, 0.00025, 0.000125]\\ & \quad \times[0.975, 0.95, 0.9]
    \end{align*}
    -- \textit{24 full runs}~(Figure~\ref{fig:160_fineweb}).
\end{itemize}

\section{Complementary Experimental Results}
\label{app:more_experiments}

The results in this section complement the discussion in \S\ref{sec:exp}. We organize them in 5 subsections, and report all technical details in \S\ref{app:exp-details}.

\begin{itemize}
    \item \S\ref{app:tuning_other_methods} outlines all hyperparameter tuning curves for the setting in Table~\ref{tab:algorithm-performance} for \SGD{}~(with/without clipping and with/without weight decay) -- Figure~\ref{fig:160_sgd_clip} and~\ref{fig:160_sgd}, \RMSprop{} without momentum -- Figure~\ref{fig:160_RMS}, and momentum on top of \SignSGD{} -- Figure~\ref{fig:160_EMA_sign}.
    \item \S\ref{app:shorter_seq} validates that $\beta_1=\beta_2$ is a strong-performing option for \Adam{} at a shorter sequence length. Here, we also show that \Signum{} performance is still suboptimal~(cf. Figure~\ref{fig:big_sweep}).
    \item \S\ref{app:more_batches} validates that $\beta_1=\beta_2$ is a strong-performing option for \Adam{} across different batchsizes. This data, comprising training 500 models, is summarized in Figure~\ref{fig:correlation}.
    \item \S\ref{app:fineweb} reproduces the \Signum{}-\Adam{} gap on Fineweb~\citep{penedo2024the}. Compared to Figure~\ref{fig:big_sweep} and the settings above, \textit{here we compare at zero weight decay to eliminate this additional confounder}.
    \item \S\ref{app:sanity_checks} confirms on the validity of our findings when ablating on nuances of \Signum{} and \Adam{} such as initialization and bias correction. These findings complement \S\ref{sec:sanity_checks}.
\end{itemize}

\subsection{Tuning for Table~\ref{tab:algorithm-performance}}
\label{app:tuning_other_methods}
\vspace{-1mm}
\paragraph{Setup Summary.} 160 M parameters LM on SlimPajama, trained for 3.2 B tokens at a batchsize of 256 $\times$ 2048 sequence length.

\vspace{-3mm}
\paragraph{Comment.} Our objective here is to tune to best, despite the combinatorially exploding number of options, our methods in Table~\ref{tab:algorithm-performance}. Details regarding our hyperparameters grid and model configurations are reported in \S\ref{app:exp-details}. We remind that tuning for \Signum{} and \Adam{} is presented directly in the main paper as Figure~\ref{fig:big_sweep}. \textbf{All figures below show optimal tuning jointly in learning rate and momentum space}. While tuning for \RMSprop{} and momentum on \SignSGD{} is straightforward, \SGD{} requires more attention: we found that removing weight decay was always beneficial when global norm clipping the raw gradient, hence we adopt this option also for the non-clipped variant, and for the variant that includes an additional coordinate clipping step after applying momentum. We believe this is due to the decoupled nature of weight decay, combined with the high learning rates required for good performance in \SGD{}. 

\vspace{-3mm}
\paragraph{Finalizing Table~\ref{tab:algorithm-performance}.} After careful tuning, we select for each method the best configuration~(given by figures below) and run two additional seeds to report final results with 2-sigma confidence bars.

\begin{figure}[ht]
\includegraphics[width=0.99\linewidth]{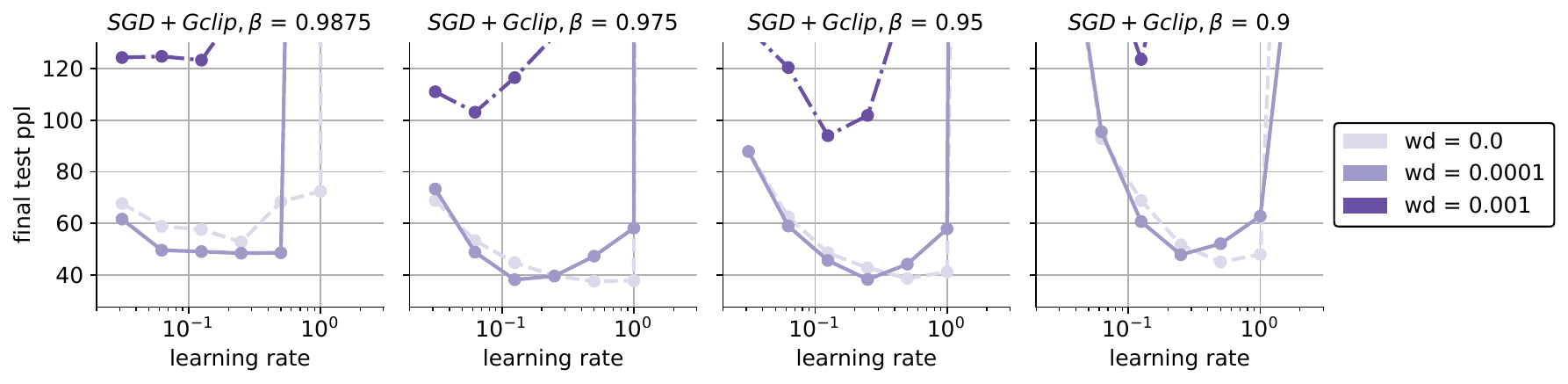}
\includegraphics[width=0.95\linewidth]{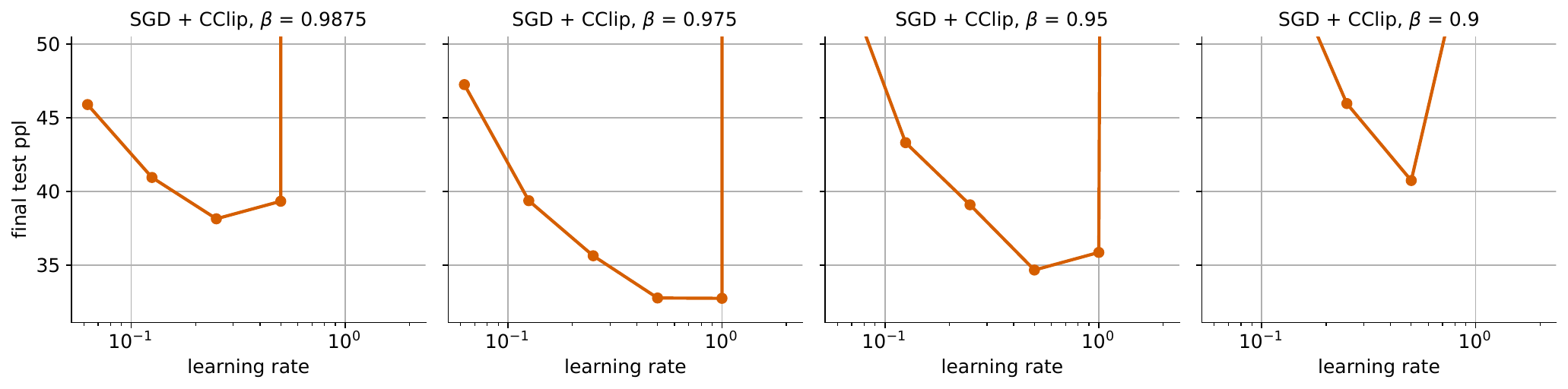}
    \caption{\small \textit{\textbf{(top) SGD with global norm clipping}. We found it beneficial to remove weight decay: the best setting achieves 37.53 ppl, while a slightly larger wd leads to 38.11. a weights decay of 0.001 is too large and yields 93.7 best validation perplexity. \textbf{(bottom) SGD with global norm clipping on raw gradients, followed by coordinate clipping on momentum.} We remove weight decay as suggested by the top plot. We observe an improvement of 5 perplexity points.}}
    \label{fig:160_sgd_clip}
\end{figure}

\begin{figure}[ht]
    \centering
\includegraphics[width=0.9\linewidth]{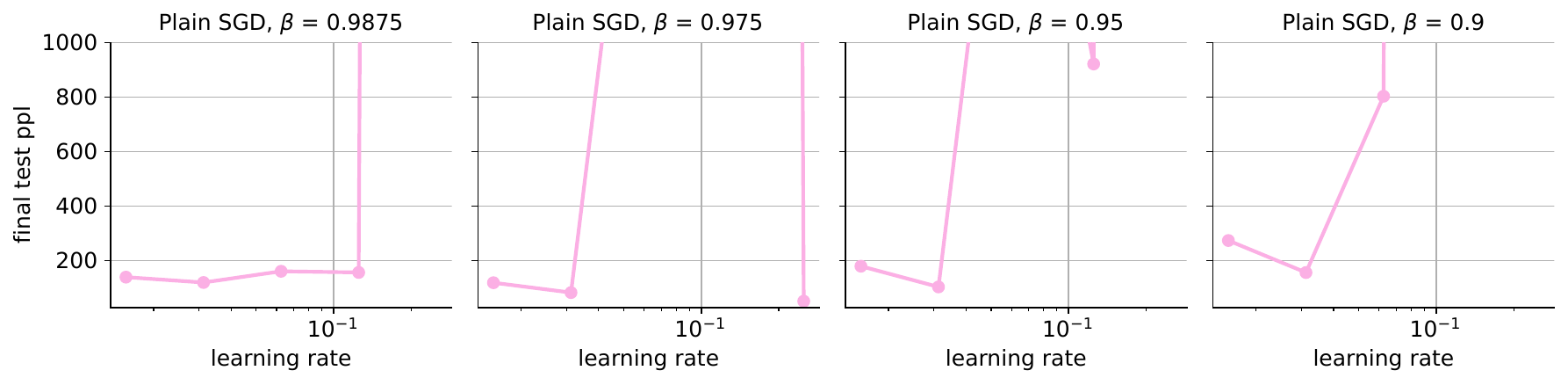}
    \caption{\small \textit{\textbf{SGD without coordinate-wise clipping} at zero weight decay~(as suggested by Figure~\ref{fig:160_sgd_clip} ). }}
    \label{fig:160_sgd}
\end{figure}

\begin{figure}[ht]
    \centering
\includegraphics[width=0.99\linewidth]{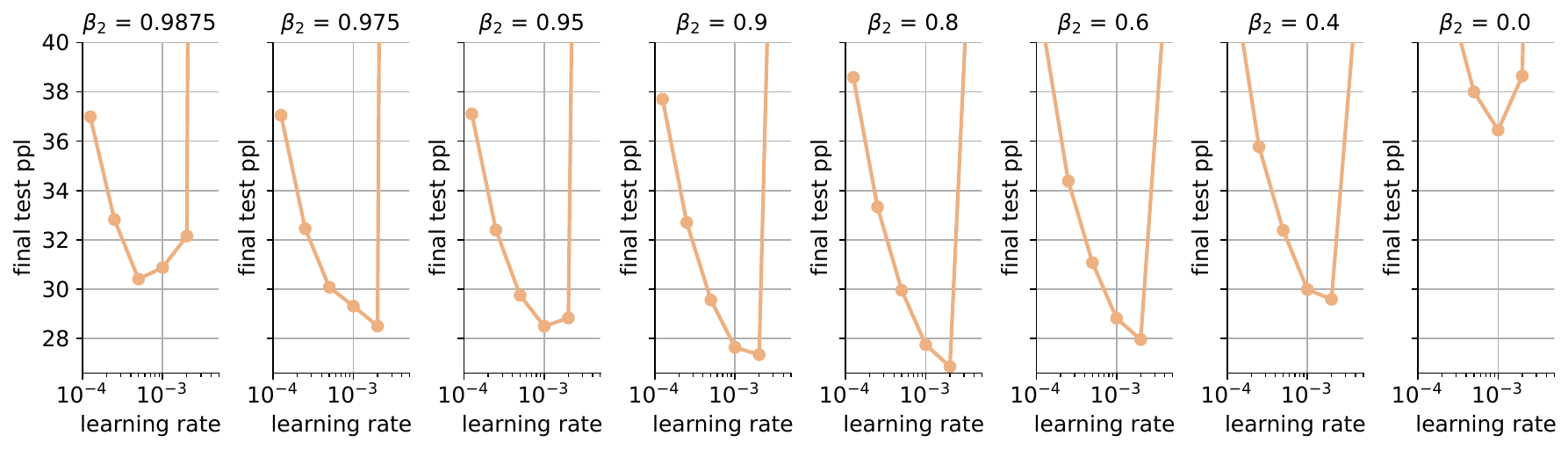}
    \caption{\small \textit{\textbf{RMSprop with decoupled weight decay} 0.1. Implemented with Pytorch AdamW setting $\beta_1=0$.}}
    \label{fig:160_RMS}
\end{figure}

\begin{figure}[ht]
    \centering
\includegraphics[width=0.9\linewidth]{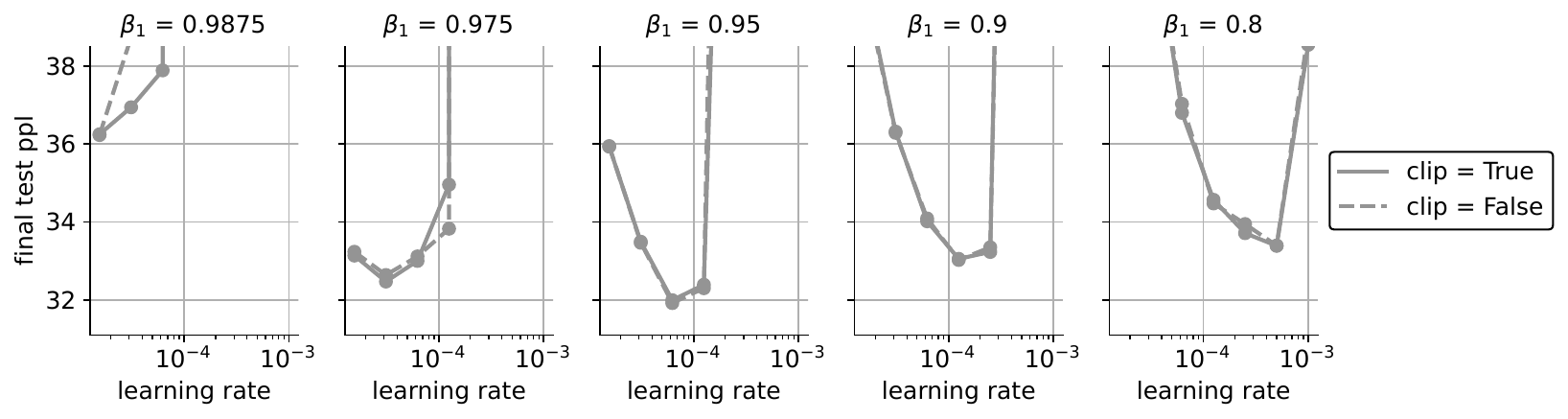}
    \caption{\small \textit{\textbf{Momentum on SignSGD} with decoupled weight decay. We implement this just for completeness to show that it is performing worse than \Signum. Clipping has mathematically no effect~(we did not notice at first, so we show the result anyways).}}
    \label{fig:160_EMA_sign}
\end{figure}

\clearpage
\newpage

\begin{figure}[ht]
    \centering
\includegraphics[width=0.99\linewidth]{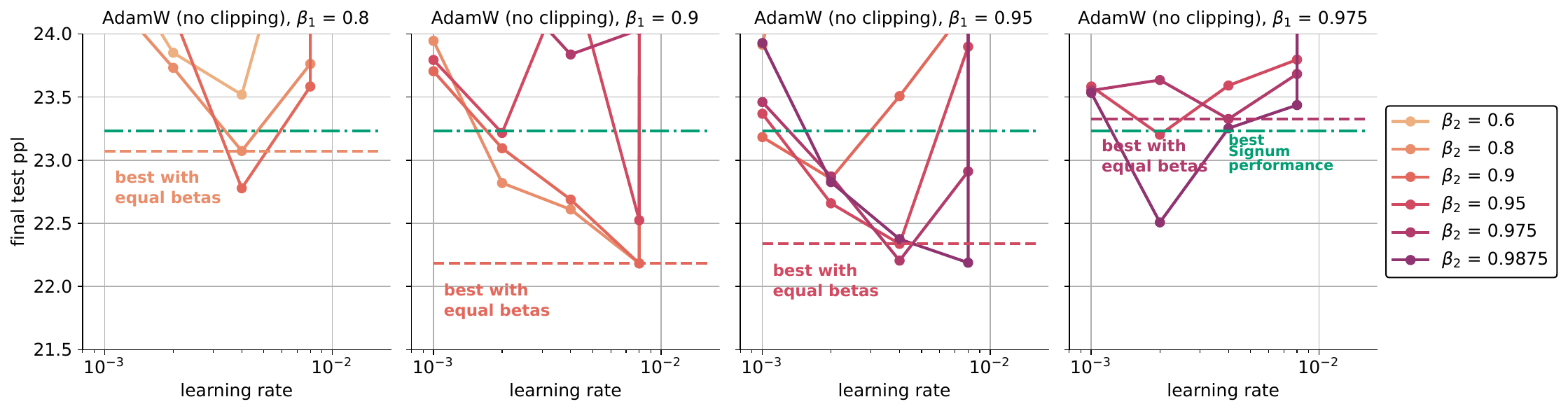}
    \caption{\small \textit{\textbf{AdamW without global norm clipping on gradients} with decoupled weight decay. Compared to Figure~\ref{fig:big_sweep}, here we do not clip gradients as a first preprocessing step. Performance is slightly worse, and results are noisier. The best setting, among the ones we tried, is $\beta_1=\beta_2=0.9$. Note, however, that for large/small $\beta_1$s, we observe that some specific configuration with high $\beta_2$ can be beneficial~(while still suboptimal if $\beta_1=\beta_2$ is tuned). In practice, best performance can also be achieved in this setting by merely tuning $\beta_1=\beta_2=\beta$, resulting in drastic hyperparameter grid size reduction.}}
    \label{fig:adam_no_clip}
\end{figure}

\subsection{Effect of More Training Tokens in Figure~\ref{fig:big_sweep}}
\label{app:more_tokens}
We run part of the experiments in Figure~\ref{fig:big_sweep} at twice the token budget. Results are conceptually very similar, and show that, on top of $\beta_1=\beta_2$ being a performance choice for AdamW, that there exists a strong correlation between $\beta$ values~(see Fig.~\ref{fig:correlation}).

\begin{figure}[ht]
    \centering
\includegraphics[width=0.99\linewidth]{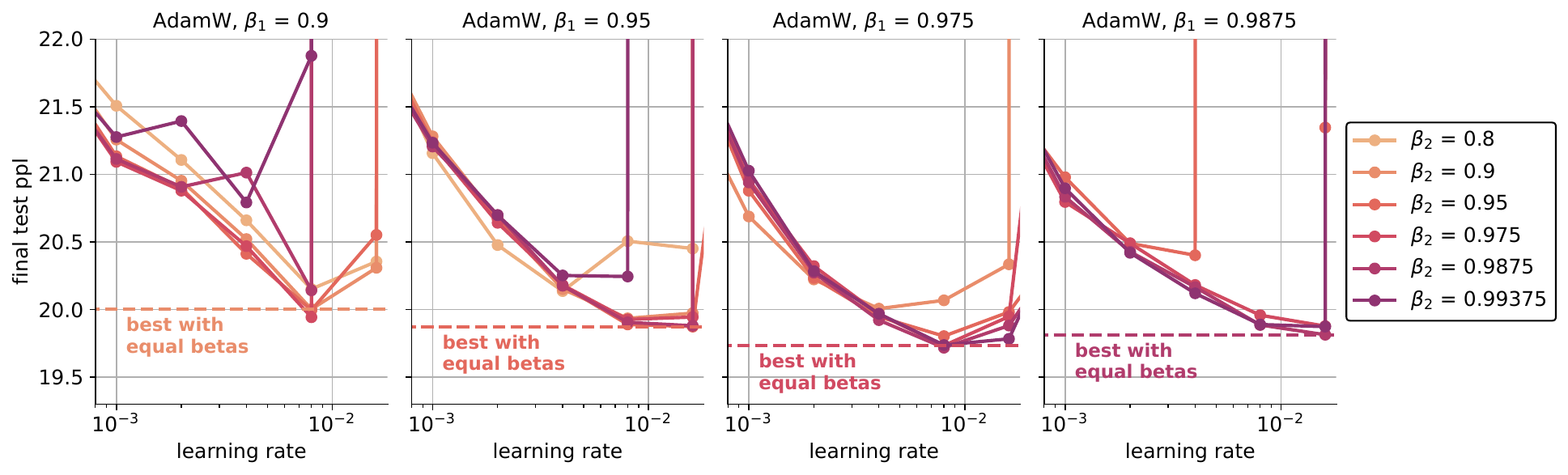}
    \caption{\small \textit{\textbf{AdamW}, same setting as Figure~\ref{fig:big_sweep}, but trained for twice the number of tokens.}}
    \label{fig:big_sweep_2chinchilla}
\end{figure}

\newpage
\subsection{Effect of Shorter Sequence Length in Figure~\ref{fig:big_sweep}}
\label{app:shorter_seq}
We run part of the experiments in Figure~\ref{fig:big_sweep} at a lower sequence length~(512), for a batch size of 256 sequences~(as Figure~\ref{fig:big_sweep}). The model here still sees 3.2B tokens~(compute optimal), but number of effective optimizer steps is 4 times bigger compared to the 2048 sequence length setting. While we still observe a sizeable gap between \Signum{} and \Adam{}, we note that this is smaller compared to Figure~\ref{fig:big_sweep}, as noted also by~\citet{zhao2025deconstructing} in a similar setting.

\begin{figure}[ht]
    \centering
\includegraphics[width=0.9\linewidth]{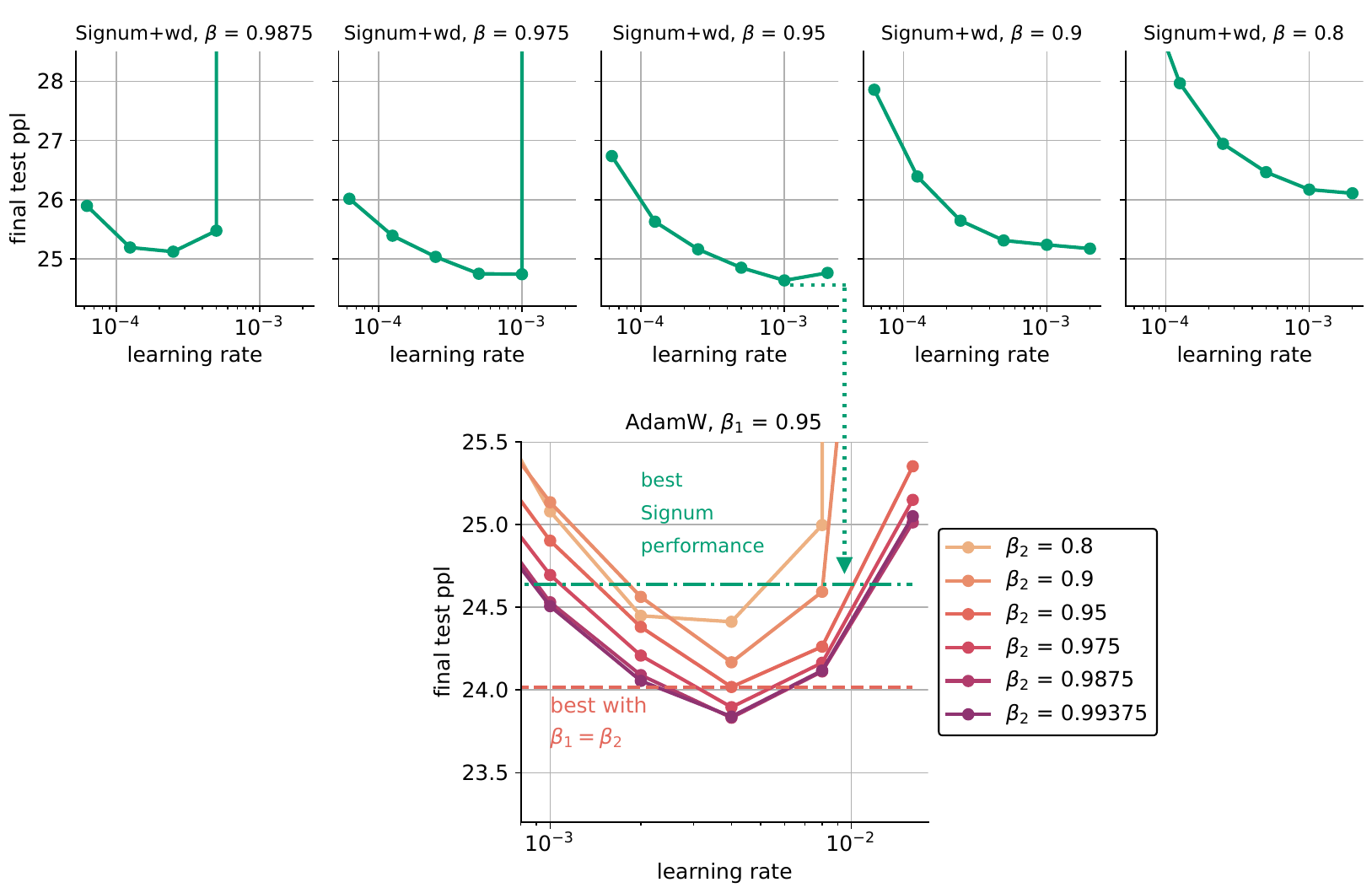}
    \caption{\small \textit{\textbf{AdamW vs Signum}, same setting as Figure~\ref{fig:big_sweep}, but at a smaller sequence length~(512).}}
    \label{fig:adam_512}
\end{figure}


\clearpage
\newpage
\subsection{Batch size ablation for Figure~\ref{fig:big_sweep}}
\label{app:more_batches}

We run part of the experiments in Figure~\ref{fig:big_sweep} at a lower and higher batch size. All other details remain the same and are summarized in \S\ref{app:exp-details} -- except for the number of steps performed: due to limitations in our resources, we chose here to train models for 2.5B tokens -- i.e. a slightly undertrained setting~(optimal would be 3.2B). In line with~\citep{malladi2022sdes,compagnoni2025adaptive} we consider half-steps when tuning. All experiments use a weight decay of 0.1.

Despite some imperfections and noise in performance, we notice that $\beta_1=\beta_2$ is a strong choice even at different batch sizes, our Takeaway 2. 

\begin{figure}[ht]
    \centering
\includegraphics[width=0.95\linewidth]{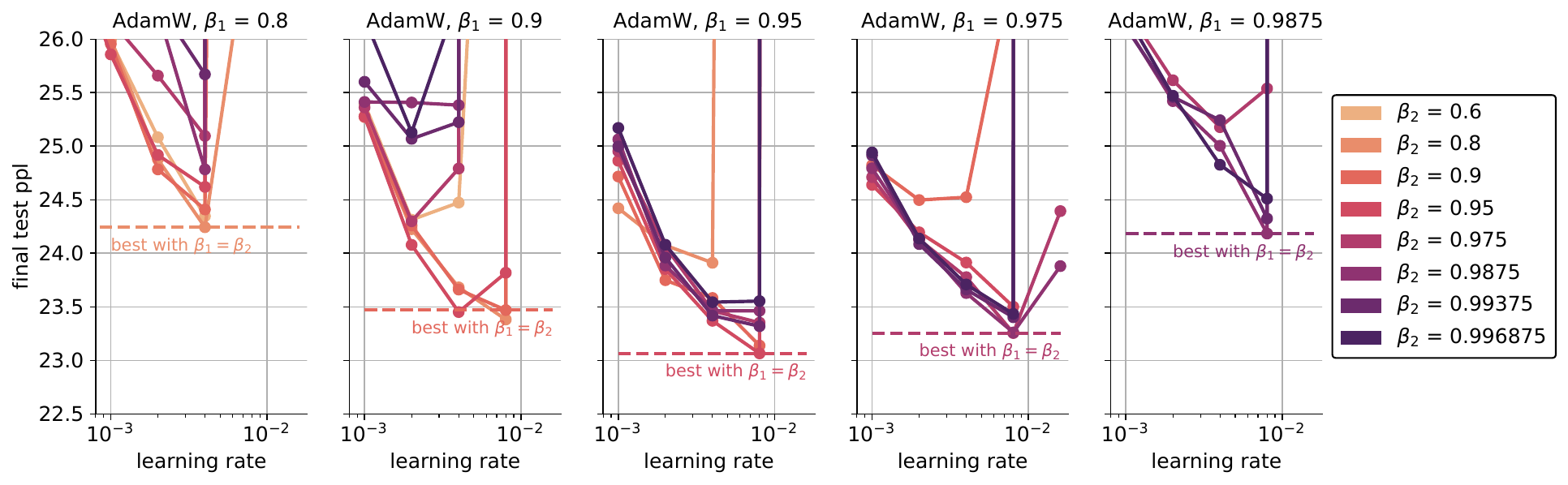}
    \caption{\small \textit{\textbf{Adam, batch size 256} trained for 2.5B tokens. Other settings are same setting as Figure~\ref{fig:big_sweep}. }}
    \label{fig:big_sweep_bs_normal}
\end{figure}

\begin{figure}[ht]
    \centering
\includegraphics[width=0.9\linewidth]{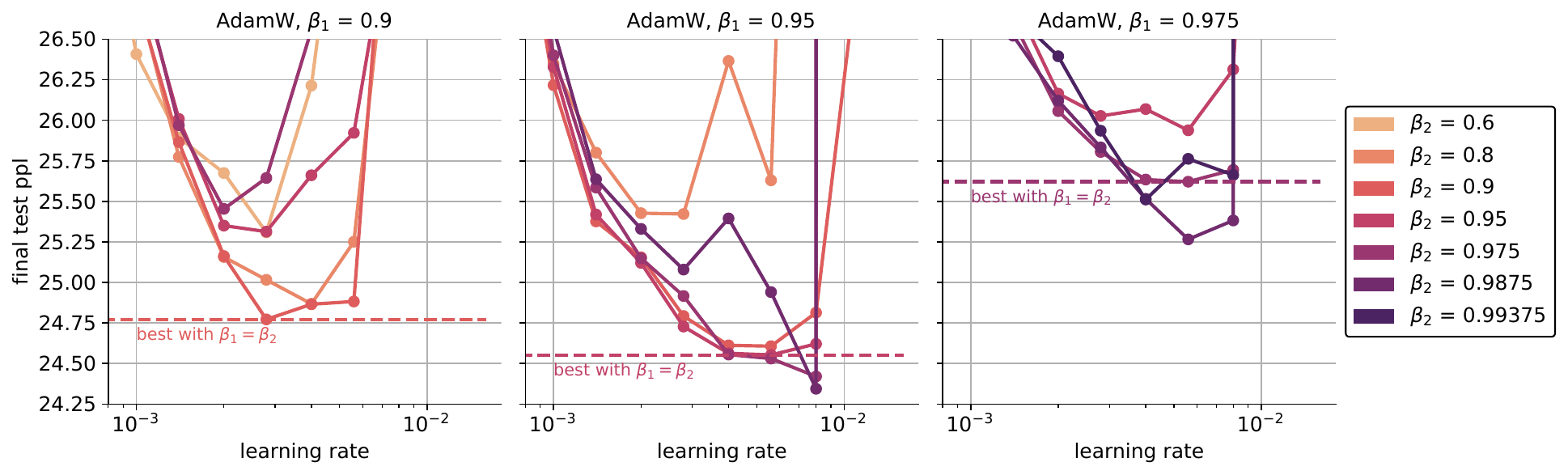}
    \caption{\small \textit{\textbf{Adam, batch size 512} trained for 2.5B tokens. Other settings are same setting as Figure~\ref{fig:big_sweep}. }}
    \label{fig:big_sweep_bs_big}
\end{figure}

\begin{figure}[ht]
    \centering
\includegraphics[width=0.9\linewidth]{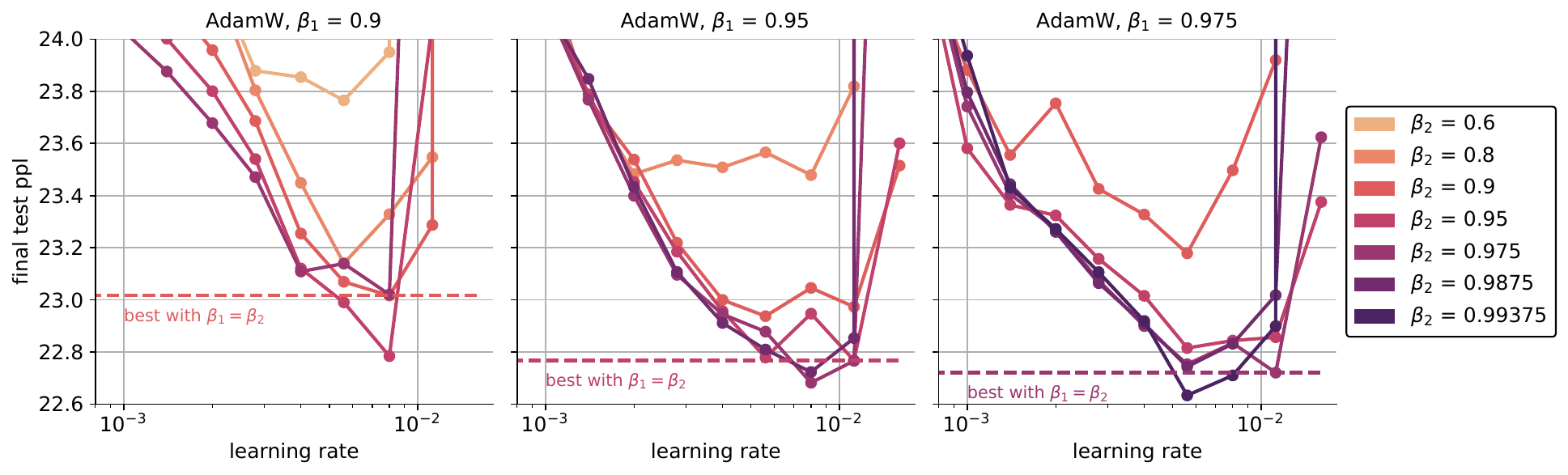}
    \caption{\small \textit{\textbf{Adam, batch size 128} trained for 2.5B tokens. Other settings are same setting as Figure~\ref{fig:big_sweep}. }}
    \label{fig:big_sweep_bs_small}
\end{figure}

\clearpage


\clearpage

\subsection{Figure~\ref{fig:big_sweep} on Fineweb~(no weight decay)}
\label{app:fineweb}

Finally, we evaluate our findings -- both strong performance of equal $\beta$s in \Adam{} and substantial gap with \Signum{} on a different dataset~(Fineweb~\citep{penedo2024the}). All other experiments in this paper are performed on SlimPajama. To add an additional axis of variation compared to previously presented settings, we here remove weight decay from all methods.

\begin{figure}[ht]
    \centering
\includegraphics[width=0.9\linewidth]{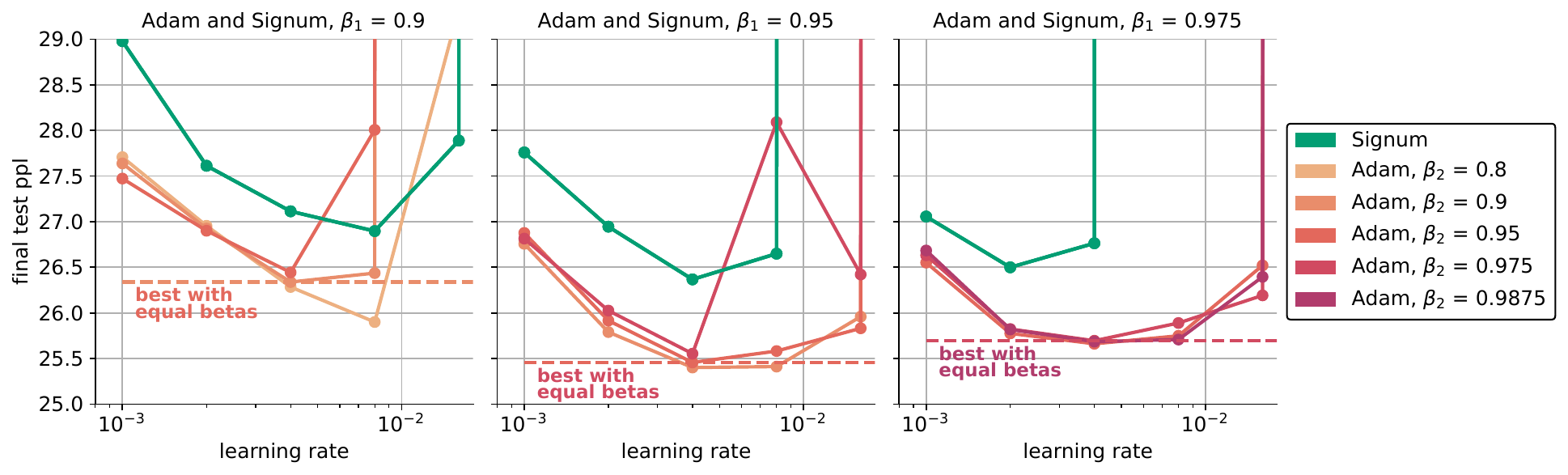}
    \caption{\small \textit{\textbf{\Adam{} and \Signum{} (no weight decay) on Fineweb}. Other settings are same as Figure~\ref{fig:big_sweep}}. For visualization purposes, here \textbf{we rescaled the visualized learning rate} of \Signum{} by a factor $\sim 10$.}
    \label{fig:160_fineweb}
\end{figure}

\subsection{Effect of Bias Correction and Zero Initialization on \Adam}
\label{app:sanity_checks}

The findings below complement our discussion in \S\ref{sec:sanity_checks}.

\begin{table}[ht]
\vspace{-3mm}
\centering
\caption{\small \textit{ZI denotes Zero init of EMA parameters, GI denotes init of EMA parameters to the measurement at first iteration, BC denotes Bias Correction. \textbf{Not} doing ZI means we initialize $m$ and $v$ at $g_0$ and $g_0^2$ respectively. Default for \Adam{} is ZI and BC. Default for Signum+WD is less clear. We found that initialization does not affect much performance in Signum, yet it does in \Adam. Performing bias correction is not as important as initialization in \Adam. All other parameters in this ablation are fixed to the optimal ones found in default settings for BC and ZI.}}
{\small
\begin{tabular}{lccccccc}
\hline
 & \Adam{} (+ZI+BC)  & \Adam{} (+ZI-BC) &  \Adam{} (+GI-BC) & Signum (+GI) &  Signum (+ZI)\\ \hline
Val ppl & 21.86\dgs{$\pm$ 0.21} & 21.89\dgs{$\pm$ 0.16} & 22.58\dgs{$\pm$ 0.35} & 23.23\dgs{$\pm$ 0.16} & 23.30\dgs{$\pm$ 0.25} \\ \hline
\end{tabular}}
\label{tab:ablation_ZI_BC}
\end{table}

\begin{figure}[ht]
    \centering
\includegraphics[width=0.88\linewidth]{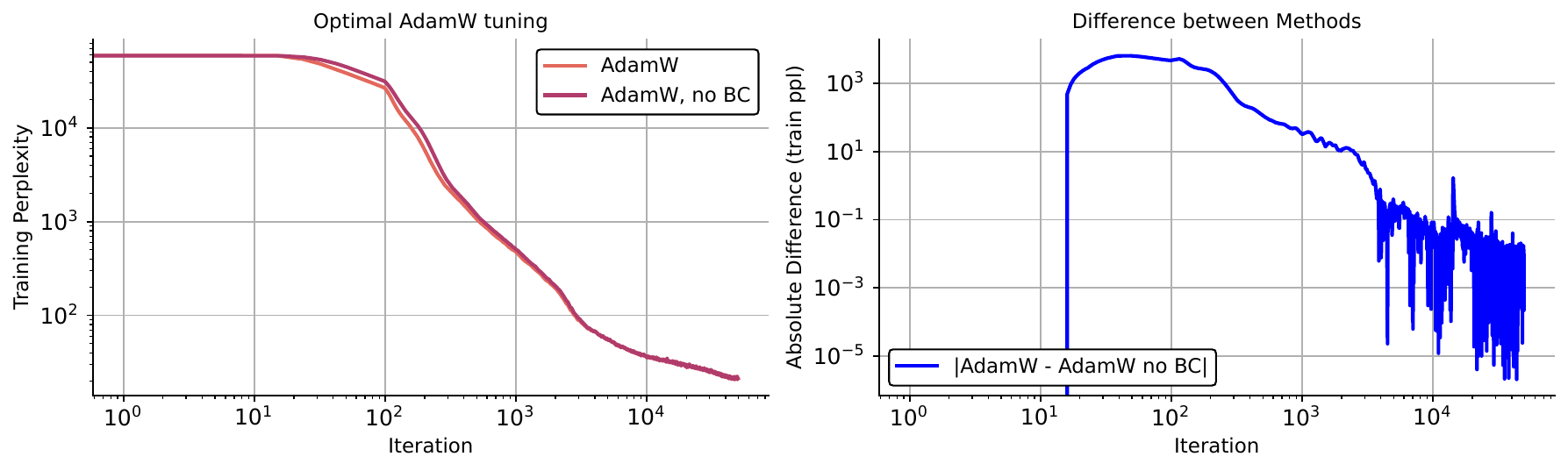}
    \caption{\small \textit{Effect of eliminating bias correction in \Adam. The difference between variants vanishes as iterations progress. Plotted is the average dynamics over 3 random seeds. }}
    \label{fig:adam_bc}
\end{figure}

\clearpage
\newpage

\section{Missing proofs and derivations}
\label{app:proofs}

\subsection{Proof of Proposition~\ref{prop:adammol} }

\propadammol*
\begin{proof}[Proof of Proposition~\ref{prop:adammol} ]
For this proof we will use the abbreviation 
\[v_k : = \texttt{EMA}_\beta[g_k^2].\]
With this abbreviation the \Adam{} update~\eqref{eq:adam-1d} can be written as
\[d_k = \frac{m_k}{\sqrt{v_k}} = \frac{m_k}{\sqrt{m_k^2 +v_k -m_k^2}}.\]
Next we will show that $v_k -m_k^2 =\beta \texttt{EMA}_\beta[(m_{k-1}-g_k)^2]$. Indeed by expanding the update of $v_{k+1}$ and $m_{k+1}$ we have that
\begin{align*}
    v_{k+1} - m_{k+1}^2 &= \beta v_k + (1-\beta) g_{k+1}^2 - (\beta m_k + (1-\beta) g_{k+1})^2\\
    &= \beta v_k + (1-\beta) g_{k+1}^2 - (\beta^2 m_k^2 + (1-\beta)^2 g_{k+1}^2 + 2\beta(1-\beta) g_{k+1} m_k)\\
    &= \beta v_k -\beta^2 m_k^2 + (1-\beta)\beta g_{k+1}^2 -2\beta(1-\beta)g_{k+1} m_k\\
    &= \beta v_k-\beta m_k^2+ \beta m_k^2-\beta^2 m_k^2 + (1-\beta)\beta g_{k+1}^2 -2\beta(1-\beta) g_{k+1} m_k\\
    &= \beta (v_k-m_k^2) + \beta(1-\beta) m_k^2 + \beta(1-\beta) g_{k+1}^2 -2\beta(1-\beta) g_{k+1} m_k\\
    &= \beta (v_k-m_k^2) + \beta(1-\beta) (m_k-g_{k+1})^2.
\end{align*}

By setting $\delta_k = v_k -m_k^2$ we have that
$$\delta_{k+1} =\beta \delta_k + \beta(1-\beta)(m_k-g_{k+1})^2 = \beta\texttt{EMA}_\beta[(m_{k-1}-g_k)^2] $$
where we used the definition of the EMA recurrence in~\eqref{eq:ema}.


\end{proof}

\subsection{Generalization of Proposition~\ref{prop:adammol} -- Necessity of equal betas for variance interpretation}
\label{sec:specific_betas}
\begin{proposition} 
\Adam{} with hyperparameters $\beta_1,\beta_2\in(0,1)$ has update of form    
$$\frac{m_k}{\sqrt{m_k^2 + \gamma \texttt{EMA}_\tau[(a m_{k-1}-b g_k)^2]}},$$
for some $a,b,\gamma\in\mathbb{R}$ and $\tau\in(0,1)$ if an only if $\beta_1=\beta_2$.
\label{prop:adamol-gen}
\end{proposition}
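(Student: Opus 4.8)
The plan is to prove both directions. The ``if'' direction ($\beta_1=\beta_2$) is already essentially done: Proposition~\ref{prop:adammol} exhibits the representation with $a=1$, $b=1$, $\gamma=\beta$, $\tau=\beta$. So the work lies entirely in the ``only if'' direction: assuming \Adam{} with parameters $\beta_1,\beta_2\in(0,1)$ admits the form $m_k/\sqrt{m_k^2 + \gamma\,\texttt{EMA}_\tau[(a m_{k-1}-b g_k)^2]}$, I must deduce $\beta_1=\beta_2$.

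First I would observe that since the \Adam{} denominator is $\sqrt{v_k}$ with $v_k=\texttt{EMA}_{\beta_2}[g_k^2]$, the claimed representation is equivalent to the identity
\begin{equation*}
v_k = m_k^2 + \gamma\,\texttt{EMA}_\tau[(a m_{k-1}-b g_k)^2]
\end{equation*}
holding for all $k$ and all gradient sequences, where now $m_k=\texttt{EMA}_{\beta_1}[g_k]$. Define $\delta_k := v_k - m_k^2$; the right-hand side says $\delta_k = \gamma\,\texttt{EMA}_\tau[(a m_{k-1}-b g_k)^2]$, i.e. $\delta_k$ satisfies the recursion $\delta_{k+1}=\tau\delta_k + \gamma(1-\tau)(a m_k - b g_{k+1})^2$. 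On the other hand I can compute $\delta_{k+1}$ directly from the \Adam{} recursions $v_{k+1}=\beta_2 v_k+(1-\beta_2)g_{k+1}^2$ and $m_{k+1}=\beta_1 m_k+(1-\beta_1)g_{k+1}$, exactly as in the proof of Proposition~\ref{prop:adammol} but keeping $\beta_1\neq\beta_2$. Expanding, $\delta_{k+1}=\beta_2 v_k - m_{k+1}^2 + (1-\beta_2)g_{k+1}^2$; substituting $v_k=\delta_k+m_k^2$ gives $\delta_{k+1}=\beta_2\delta_k + \beta_2 m_k^2 - (\beta_1 m_k+(1-\beta_1)g_{k+1})^2 + (1-\beta_2)g_{k+1}^2$, which is $\beta_2\delta_k$ plus a quadratic form $Q(m_k,g_{k+1})$ in the two variables $m_k$ and $g_{k+1}$.

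The key step is then a matching-of-coefficients argument. The two expressions for $\delta_{k+1}$ must agree identically as functions of the independent quantities $\delta_k$, $m_k$, $g_{k+1}$ (these can be made independent by choosing gradient sequences appropriately: $\delta_k$ and $m_k$ depend only on $g_1,\dots,g_k$ while $g_{k+1}$ is free, and $\delta_k,m_k$ can be varied somewhat independently). Matching the coefficient of $\delta_k$ forces $\tau=\beta_2$. Matching the remaining quadratic form $\gamma(1-\beta_2)(a m_k-b g_{k+1})^2 = \beta_2 m_k^2 - (\beta_1 m_k+(1-\beta_1)g_{k+1})^2+(1-\beta_2)g_{k+1}^2$ gives three scalar equations from the $m_k^2$, $g_{k+1}^2$, and $m_k g_{k+1}$ coefficients:
\begin{align*}
\gamma(1-\beta_2)a^2 &= \beta_2 - \beta_1^2,\\
\gamma(1-\beta_2)b^2 &= (1-\beta_2) - (1-\beta_1)^2,\\
-2\gamma(1-\beta_2)ab &= -2\beta_1(1-\beta_1).
\end{align*}
The last of these, squared, must equal $4$ times the product of the first two (since $(ab)^2=a^2b^2$), yielding $\beta_1^2(1-\beta_1)^2 = (\beta_2-\beta_1^2)\big((1-\beta_2)-(1-\beta_1)^2\big)$. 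Expanding both sides and simplifying should collapse to $(\beta_1-\beta_2)^2\cdot(\text{something nonzero})=0$, forcing $\beta_1=\beta_2$. I expect the main obstacle to be this final algebraic simplification together with the care needed to justify treating $\delta_k$, $m_k$, $g_{k+1}$ as genuinely free parameters — one must exhibit concrete gradient sequences realizing enough $(\delta_k,m_k)$ pairs (for fixed $k\geq 2$) so that polynomial identity in $g_{k+1}$ plus variation in $(\delta_k,m_k)$ legitimately forces the coefficient equations; a clean way is to first fix $m_k$ and vary $g_{k+1}$ over $\mathbb{R}$ to get the coefficients as functions of $(\delta_k,m_k)$, then note these coefficients are themselves constants so the $\delta_k$-dependence pins $\tau=\beta_2$ and the rest follows. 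Also one should double-check the degenerate possibility $a=0$ or $b=0$ or $\gamma=0$: if $\gamma=0$ then $v_k=m_k^2$ identically, impossible for $\beta_1,\beta_2\in(0,1)$; if $a=0$ then the first equation gives $\beta_2=\beta_1^2$ but the mixed-term equation gives $0=\beta_1(1-\beta_1)\neq 0$, contradiction; similarly $b=0$ is excluded.
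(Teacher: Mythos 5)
Your proposal is correct and follows essentially the same route as the paper's proof: expand $v_{k+1}-m_{k+1}^2$ under the \Adam{} recursions and match the resulting quadratic form in $(m_k, g_{k+1})$ against $\gamma(1-\tau)(a m_k - b g_{k+1})^2$, arriving at the same final identity $(\beta_1-\beta_2)^2=0$ (the paper completes the square and then checks the leftover $m_k^2$ coefficient, whereas you use the compatibility condition $(ab)^2=a^2b^2$ among the three coefficient equations --- an equivalent calculation). Your additional care in pinning $\tau=\beta_2$, justifying why $\delta_k$, $m_k$, $g_{k+1}$ may be treated as independent, and excluding the degenerate cases $a=0$, $b=0$, $\gamma=0$ goes beyond what the paper writes down but does not change the substance.
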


\begin{proof}[Proof of Proposition~\ref{prop:adamol-gen}.]
Let us expand the expression.
\begin{align*}
    v_{k+1} - m_{k+1}^2 &= \beta_2 v_k + (1-\beta_2) g_{k+1}^2 - (\beta_1 m_k + (1-\beta_1) g_{k+1})^2\\
    & = \beta_2 v_k + (1-\beta_2) g_{k+1}^2 - [\beta_1^2 m_k^2 + (1-\beta_1)^2 g_{k+1}^2 + 2\beta_1(1-\beta_1) m_k g_{k+1}]\\
    & = \beta_2 v_k - \beta_1^2 m_k^2 + [(1-\beta_2)-(1-\beta_1)^2] g_{k+1}^2 - 2\beta_1(1-\beta_1) m_k g_{k+1}
\end{align*}

\paragraph{The case of equal betas.} Notice that if $\beta_1=\beta_2=\beta$, then
\begin{align*}
    &(1-\beta)-(1-\beta)^2 = 1-\beta -(1+\beta^2-2\beta) = 1-\beta -1 -\beta^2 +2\beta = \beta(1-\beta),
\end{align*}
and so the expression gets simplified:
\begin{align*}
    v_{k+1} - m_{k+1}^2 &= \beta v_k - \beta^2 m_k^2 + \beta(1-\beta)[g_{k+1}^2 - 2 m_k g_{k+1}]
\end{align*}
Now add and subtract $\beta m_k^2$, to get

\begin{align*}
    v_{k+1} - m_{k+1}^2 &= \beta (v_k-m_{k+1}^2) + \beta(1-\beta)[m_k^2 + g_{k+1}^2 - 2 m_k g_{k+1}].
\end{align*}
where the last term is the perfect square $(m_k-g_{k+1})^2$.

\paragraph{The general setting.} One might hope for the ``stars aligning'' into a perfect square also in the general setting. For this to happen, we need to require that the term
$$[(1-\beta_2)-(1-\beta_1)^2] g_{k+1}^2 - 2\beta_1(1-\beta_1) m_k g_{k+1}$$
allows for such a simplification to happen. That is, assume to start from
$$(am_k -bg_{k+1})^2 = a^2 m_k -2ab m_k g_{k+1} + b^2 g_{k+1}^2.$$
we need 
$$b^2 = (1-\beta_2)-(1-\beta_1)^2, \quad ab = \beta_1(1-\beta_1).$$
so
$$a = \frac{\beta_1(1-\beta_1)}{\sqrt{(1-\beta_2)-(1-\beta_1)^2}}.$$
Therefore:
\begin{align*}
    &\left(\frac{\beta_1(1-\beta_1)}{\sqrt{(1-\beta_2)-(1-\beta_1)^2}} m_k - \sqrt{(1-\beta_2)-(1-\beta_1)^2}g_{k+1}\right)^2\\
    &= \frac{\beta_1^2(1-\beta_1)^2}{(1-\beta_2)-(1-\beta_1)^2} m_k^2 +[(1-\beta_2)-(1-\beta_1)^2] g_{k+1}^2 - 2\beta_1(1-\beta_1) m_k g_{k+1}
\end{align*}
Therefore, in the general setting, we can write
\begin{align*}
    v_{k+1} - m_{k+1}^2 &= \beta_2 v_k - \left(\beta_1^2 + \frac{\beta_1^2(1-\beta_1)^2}{(1-\beta_2)-(1-\beta_1)^2}\right) m_k^2  + \\
    &\qquad  +\left(\frac{\beta_1(1-\beta_1)}{\sqrt{(1-\beta_2)-(1-\beta_1)^2}} m_k - \sqrt{(1-\beta_2)-(1-\beta_1)^2}g_{k+1}\right)^2
\end{align*}
Massaging a bit, we get

\begin{align*}
    v_{k+1} - m_{k+1}^2 &= \beta_2 v_k - \frac{\beta_1^2(1-\beta_2)}{(1-\beta_2)-(1-\beta_1)^2} m_k^2  + \\
    &\qquad  +\left(\frac{\beta_1(1-\beta_1)}{\sqrt{(1-\beta_2)-(1-\beta_1)^2}} m_k - \sqrt{(1-\beta_2)-(1-\beta_1)^2}g_{k+1}\right)^2
\end{align*}

which implies

\begin{align*}
    v_{k+1} - m_{k+1}^2 &= \beta_2 \left(v_k - \frac{\beta_1^2(1-\beta_2)}{\beta_2(1-\beta_2)-\beta_2(1-\beta_1)^2} m_k^2\right)  + \\
    &\qquad  +\left(\frac{\beta_1(1-\beta_1)}{\sqrt{(1-\beta_2)-(1-\beta_1)^2}} m_k - \sqrt{(1-\beta_2)-(1-\beta_1)^2}g_{k+1}\right)^2.
\end{align*}

Therefore, the formula holds true if and only if
$$\frac{\beta_1^2(1-\beta_2)}{\beta_2(1-\beta_2)-\beta_2(1-\beta_1)^2} = 1.$$

That is, if and only if
$$\beta_1^2(1-\beta_2) = \beta_2(1-\beta_2)-\beta_2(1-\beta_1)^2.$$
The condition simplifies, as it reads:
$$\beta_1^2-\beta_1^2\beta_2 = \beta_2-\beta_2^2-\beta_2-\beta_2\beta_1^2+2\beta_1\beta_2.$$
which simplified is
$$\beta_1^2+\beta_2^2 -2\beta_1\beta_2 = 0.$$
i.e.
$$(\beta_1-\beta_2)^2=0 \quad \iff \quad \beta_1=\beta_2.$$
\end{proof}

\subsection{Proof of Theorem~\ref{thm:theoVIAdam}}
\theoVIAdam*
\begin{proof}
Recall that
\begin{align*}
    -\log p(g_{k+1} \mid m, \var) &= \frac{1}{2} \log \var + \frac{1}{2\var}(g_{k+1} - m)^2, \\
    \mathrm{KL}\left(\mathcal{N}(m_k, \var_k)\,\|\,\mathcal{N}(m, \var)\right) &=
    \frac{1}{2} \left[ \frac{\var_k}{\var} + \frac{(m_k - m)^2}{\var} - 1 - \log\left( \frac{\var_k}{\var} \right) \right].
\end{align*}

Therefore

\begin{align*}
    F(m,\sigma^2) &= -\log p(g_{k+1} \mid m, \var) + \frac{1}{\lambda}\, \mathrm{KL}\left(\mathcal{N}(m_k, \var_k)\,\|\,\mathcal{N}(m, \var)\right)\\
    &= \frac{1}{2} \log \var + \frac{1}{2\var}(g_{k+1} - m)^2 +  \frac{1}{2\lambda} \left[ \frac{\var_k}{\var} + \frac{(m_k - m)^2}{\var} - 1 - \log\left( \frac{\var_k}{\var} \right) \right]
\end{align*}
Since we are not optimizing for $\sigma^2_k$, we can replace $-\log\left( \frac{\var_k}{\var} \right) = \log(\var)$ and drop constants, gives the following objective function
\[
\min_{m, \var\geq 0}F(m, \var) =
\frac{1}{2}\frac{1+\lambda}{\lambda} \log(\var) + \frac{1}{2\var}\left[(g - m)^2 +
\frac{1}{\lambda} \left(
\var_k + (m_k - m)^2
\right) \right] + \text{const}.
\]

\textbf{Stationarity in \( m \):} Differentiating in $m$ and setting to zero gives
\[
\frac{\partial F}{\partial m} = -\frac{1}{\var}(g - m) - \frac{1}{\lambda \var}(m_k - m) = 0.
\]
Multiplying by \( \lambda \sigma^2 \), we get:
\begin{equation} \label{eq:templinzezdde}
-\lambda(g - m) - (m_k - m) = 0 \quad \Rightarrow \quad m = \frac{\lambda g + m_k}{1 + \lambda}.
\end{equation}

\textbf{Stationarity in \( \sigma^2 \):} Differentiating in $\sigma^2$ and setting to zero gives
\[
\frac{\partial F}{\partial \var} =
\frac{1}{2}  \frac{1+\lambda}{\lambda}\cdot \frac{1}{\var}
- \frac{1}{2\sigma^4} \left[ (g - m)^2 + \frac{1}{\lambda} \left( \var_k + (m_k - m)^2 \right) \right] = 0.
\]
Multiplying both sides by \( 2\sigma^4 \), and re-arranging gives:
\[
 \frac{1+\lambda}{\lambda}\var = (g - m)^2 + \frac{1}{\lambda} \left( \var_k + (m_k - m)^2 \right).
\]
Multiplying through by  $\frac{\lambda}{1+\lambda}$ gives
\begin{equation} \label{eq:tesmlefolenslfzef}
\var = \frac{\lambda(g - m)^2 +  \left[ \var_k + (m_k - m)^2 \right]}{1 + \lambda}.
\end{equation}

Now using \( m = \frac{\lambda g + m_k}{1 + \lambda} \) from~\eqref{eq:templinzezdde} we have that
\[
g - m = \frac{g - m_k}{1 + \lambda}, \quad
m_k - m = \frac{\lambda(m_k - g)}{1 + \lambda}.
\]
Therefore:
\[
(g - m)^2 = \frac{(g - m_k)^2}{(1 + \lambda)^2}, \quad
(m_k - m)^2 = \frac{\lambda^2 (g - m_k)^2}{(1 + \lambda)^2}.
\]

Using the above in the expression for \( \var \) in~\eqref{eq:tesmlefolenslfzef}, we get:
\[
\var = \frac{\lambda(g - m_k)^2}{(1 + \lambda)^2} + \frac{\var_k}{1 + \lambda}.
\]
This, together with~\eqref{eq:templinzezdde} gives the final solution
\[
\boxed{
m_{k+1} = \frac{ m_k+\lambda g }{1 + \lambda}
}
\quad \text{and} \quad
\boxed{
\var_{k+1} =  \frac{\var_k}{1 + \lambda}+\frac{\lambda(g - m_k)^2}{(1 + \lambda)^2} 
} \,.
\]
If we use the standard momentum parameterization, which corresponds to $\beta =\frac{1}{1+\lambda}$ we arrive at the stated results \eqref{eq:mom-VI-view} and \eqref{eq:varVI-view} of the theorem.
\end{proof}

\subsection{Performance of generalized Adam reformulation}
\label{sec:adavar}

As described in \S\ref{sec:balles}, we here consider performance of the update direction:

\begin{equation*}
    d_k = \frac{m_k}{\sqrt{m_k^2 + \gamma \, \EMA_\tau[(m_{k-1} - g_k)^2]}} \tag{AdaVar}
\end{equation*}

This reduces to \Adam{} with equal betas as soon as $\beta = \gamma = \tau$ but cannot be written as an \Adam{} update as soon as $\beta \ne \gamma$ or $\gamma\ne \tau$~(see proof in~\S\ref{sec:specific_betas}). Further, our theory in~\S\ref{sec:theory} shows that $\beta = \gamma = \tau$ is the only theoretically grounded choice for a precise online variational inference interpretation, also in this setting, i.e. when considering $\sigma^2_k= \gamma \, \EMA_\tau[(m_{k-1} - g_k)^2]$. We wonder if this insight correlates with optimal performance.

As one can see in Figure~\ref{fig:adavar}, we found that setting $\beta=\tau=\gamma$ leads to near optimal performance in all settings.

\begin{figure}[ht]
    \centering
    \includegraphics[height=0.3\linewidth]{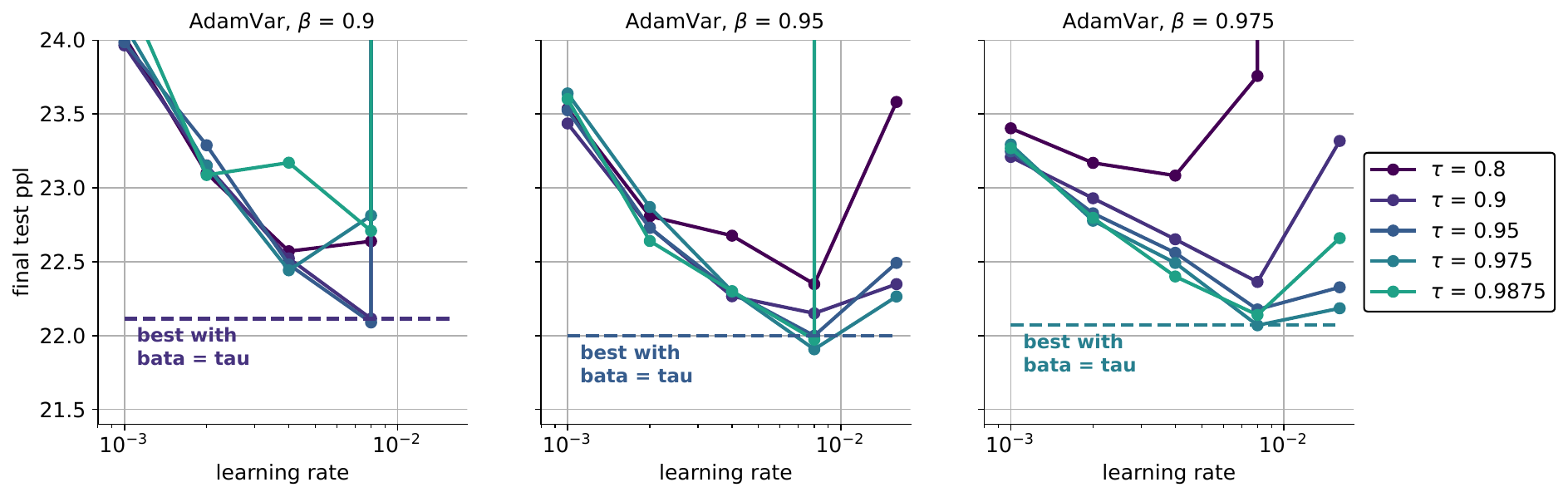}
    \includegraphics[height=0.3\linewidth]{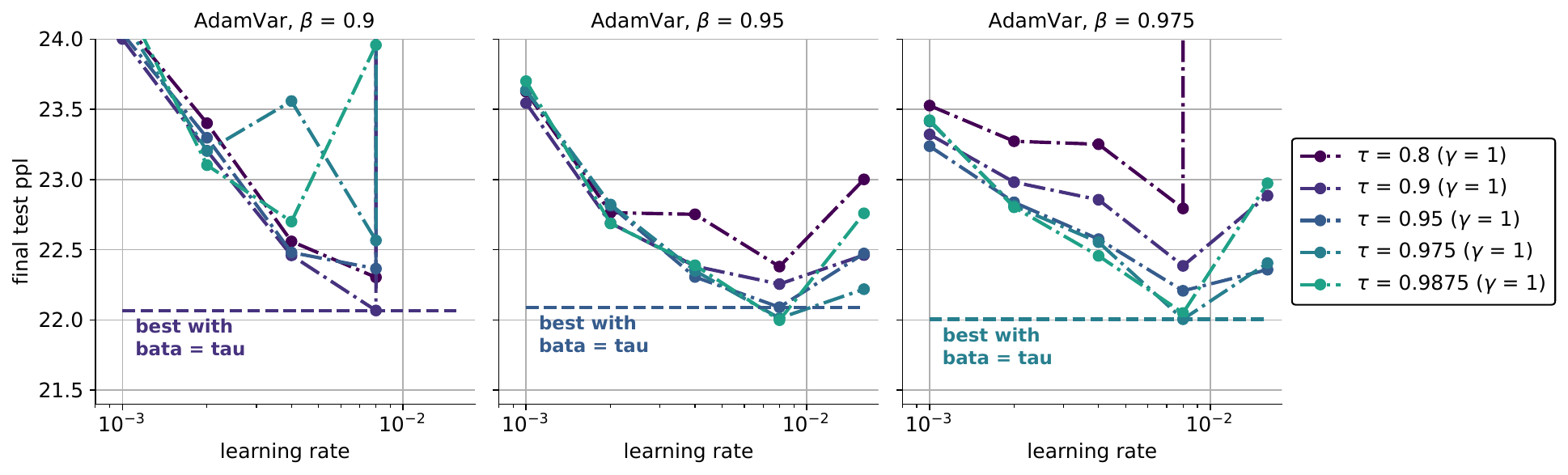}
    \caption{\textit{Performance of AdaVar \textbf{aligns with our theoretical insights}. Setup for these experiments is exactly the same as for Figure~\ref{fig:big_sweep}}.}
    \label{fig:adavar}
\end{figure}

\clearpage
\newpage
\section{Toy Quadratic Example}
\label{app:quadratic_details}
\begin{figure}[ht]
    \centering
    \includegraphics[width=0.99\linewidth]{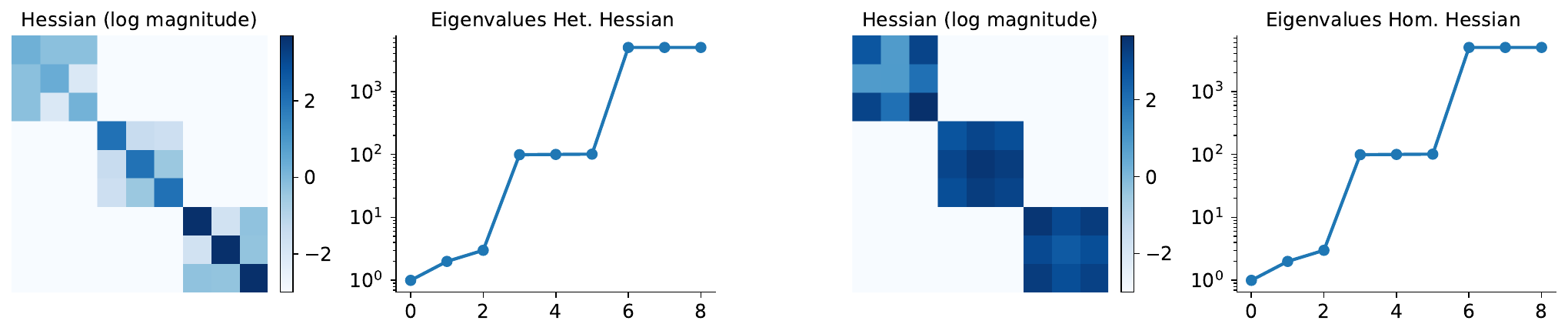}
    \caption{\textit{(left) Heterogeneous and (right) Homogeneous Hessian considered in \S\ref{sec:quadratic}.}}
    \label{fig:quadratic-hessian}
\end{figure}

Our setup here is inspired directly from the results and discussions in~\citet{zhang2024why}. Specifically, we consider the loss

$$L(w) = \frac{1}{2}w^\top H w$$

where we construct the Homogeneous and Heterogeneous Hessians using the following procedure:
\begin{itemize}
    \item We fix the eigenvalues, equal in both cases, to
    $$\text{eig}(H_{\text{hom}}) = \text{eig}(H_{\text{het}}) =\{1,2,3,99,100,101,4998,4999,5000\}.$$
    \item We choose both Hessians to be block-diagonal, with blocks of size $3\times 3$. The homogeneous Hessian has eigenvalues of different magnitude in each block, while the Heterogeneous keeps similar magnitudes in each block.
    \begin{center}
        \verb|H_details_het = [[1,2,3],[99,100,101],[4998,4999,5000]]|\\
        \verb|H_details_hom = [[1,99,4998],[2,100,4999],[3,101,5000]]|
    \end{center}
    \item For each block, we apply a random rotation to the diagonal matrix of eigenvalues, specific to each block. Each rotation is sampled from the Haar measure by decomposition of a random $3\times 3$ positive semidefinite matrix $AA^\top$, where $A\in\R^{3\times 3}$ has i.i.d. Gaussian entries.
\end{itemize}
The result is shown in Figure~\ref{fig:quadratic-hessian}.

Next, to introduce stochasticity in this setting, we simply take the square root of the Hessian to define a $9\times 9$ design matrix $X$

$$H = X^\top X, \qquad X = H^{\frac{1}{2}}$$

and subsample a number~(the batchsize) of rows of $X$ at each iteration.

\clearpage
\newpage
\section{Signal Processing Perspective}
\label{app:signal_processing}
In this last section, we examine \Adam{} through a signal processing lens, to get qualitative insights into its distinction with \Signum{} and other \SignSGD{} with momentum variants. Setting  $\beta_1=\beta_2=\beta$, we can write the \Adam{} update, without bias correction~(see \S\ref{app:sanity_checks}) as simply
$$d_k=\left(\sqrt{\texttt{EMA}_{\beta}[g_k^2]}+\epsilon\right)^{-1}\texttt{EMA}_{\beta}[g_k]$$
where $(g_k)_k$ is the gradient signal. One might wonder if this special case allows for a simpler graphical interpretatoin of \Adam{}. To do this, \textbf{we consider here fixing the gradient signal, and see how different methods process this signal.}

\paragraph{Graphical intuition.} We denote by $d_k$ the update of \Adam{} once it sees a gradient signal $(g_i)_{i\le k}$:

and plot its dynamics as a function of a synthetic one-dimensional gradient in Figure~\ref{fig:filtering}.
\begin{figure}[ht]
    \centering
    \includegraphics[width=0.99
    \linewidth]{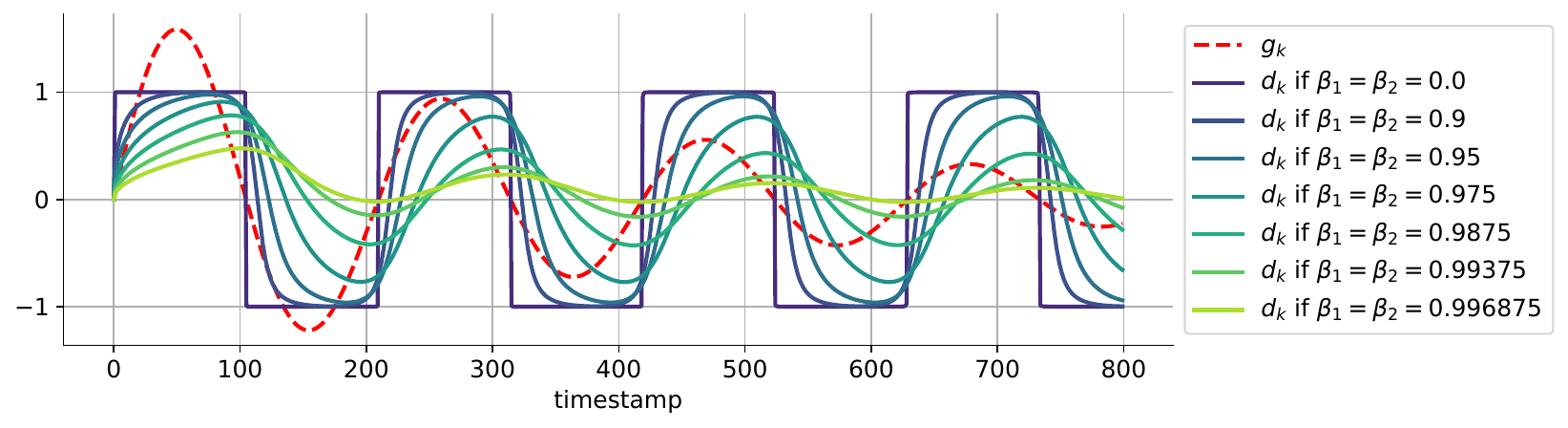}
    \caption{Filtering effect for same $\beta_1=\beta_2$.}
    \label{fig:filtering}
\end{figure}

In the example of Figure~\ref{fig:filtering}, we chose the synthetic gradient signal $$g_k = 1.8\sin(0.03 k) \exp(-0.0025 k)$$
this is a damped periodic signal {\color{red} plotted in red}. Note that this is pure filtering, there is no loss or learning process. We note the following:
\begin{enumerate}
    \item $\beta_1=\beta_2=0$ is obviously just $\texttt{sign}(g_k)$. This is plotted for comparison.
    \item For any $\beta_1=\beta_2\ne0$, $d_k$ is bounded by $1$ in magnitude. It's dynamics however, for e.g. $\beta_1=\beta_2>0$ is smooth and follows more closely the gradient, while being bounded. It is somehow a rescaled version. More on this later.
    \item Very interestingly, $d_k$ is blind to the decay term $\exp(-0.0025 k)$, the output is perfectly periodic for every $\beta_1=\beta_2$.
\end{enumerate}

Towards proceeding, note that $d_k$ \textbf{cannot be reduced to momentum on the sign or sign on the momentum(Signum)}: both variants actually destroy the signal shape, while $d_k$ maintains the shape of the original signal and has clear invariance properties. The behavior of signSGD with momentum~(2 variants) is shown in Figure~\ref{fig:filtering2}: as one can see, the behavior is drastically different from $d_k$ in Figure~\ref{fig:filtering}, an enlargement is shown in Figure~\ref{fig:filtering3}.

We now try to formalize some of the properties we observe.

\paragraph{Properties.} \Adam{} can be seen as a very special operator $T$ on gradient sequences $(g_k)_{k=0}^\infty\in \mathcal{G}\subseteq \ell_\infty$~(with normed vector space structure and notation). We can identify four distinctive properties. $T:(g_k)_{k=0}^\infty \to (d_k)_{k=0}^\infty$.
\begin{enumerate}
    \item It is \textbf{causal}.
    \item It is \textbf{invariant to positive scaling}: $T(\alpha\cdot g) = T(g)$, for any $\alpha>0$.
    \item It is \textbf{odd}: $T(-g) = -T(g)$.
    \item It has \textbf{bounded} infinity norm: $\|T(g)\|_\infty\le 1$ for all $g\in\ell_\infty$.
    \item \textbf{Density}: For any $b\in[-1,1]$ and any arbitrary $k>0$, there exists $(g_k)_{k=0}^\infty$ such that $d_k=b$.
\end{enumerate}

We are amazed by these rich set of properties, thickening our interest in better understanding the properties of \Adam{} mollification, which we study in \S\ref{sec:theory}.

\begin{figure}[ht]
    \centering
    \includegraphics[width=0.99
    \linewidth]{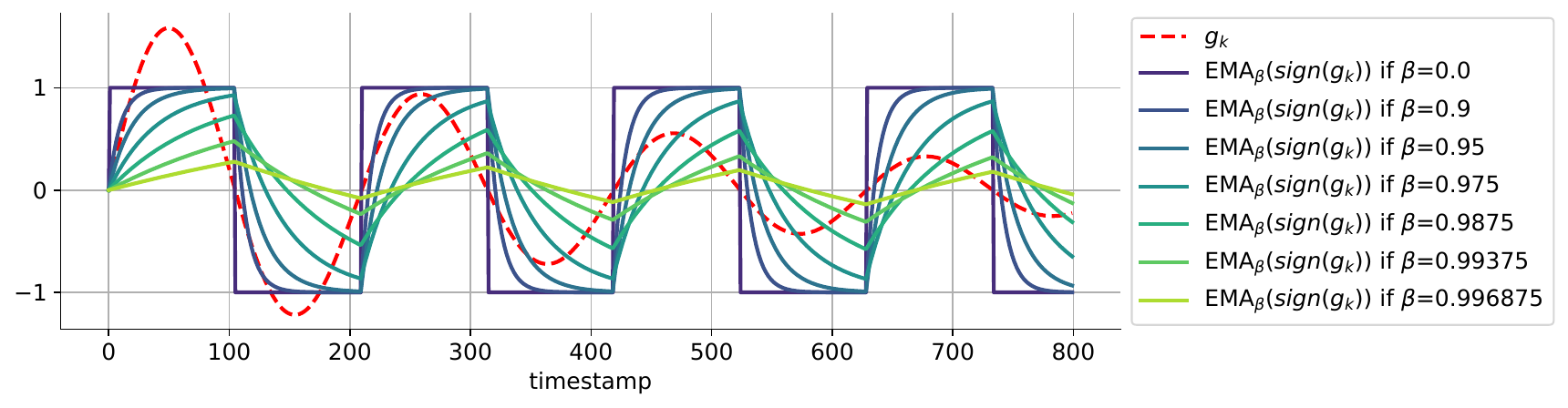}
    \includegraphics[width=0.99
    \linewidth]{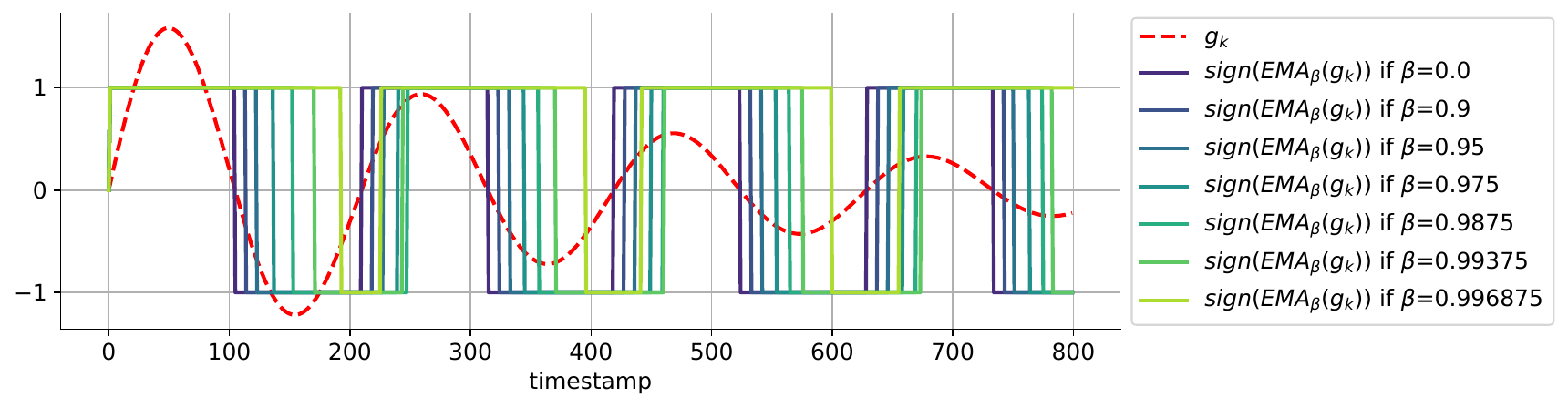}
    \caption{Filtering induced by signSGD with momentum~(2 variants, the one below is \Signum{}). Compare with Figure~\ref{fig:filtering}.}
    \label{fig:filtering2}
\end{figure}

\begin{figure}[ht]
    \centering
    \includegraphics[width=0.99
    \linewidth]{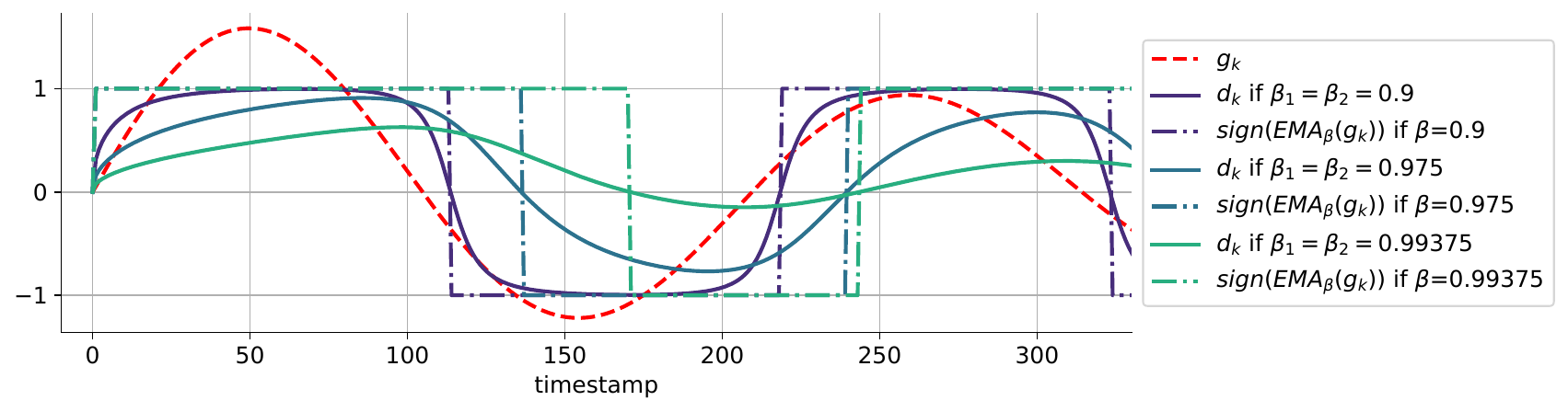}
    \caption{\Adam-like filtering compared to sign of EMA~(\Signum{}), detail.}
    \label{fig:filtering3}
\end{figure}
\vspace{15mm}
We hope this investigation ispires future effors in understanding these intriguing phenomena and properties. We conclude the paper with a quote, stolen from the Bernt Øksendal masterpiece book on SDEs:

\begin{center}
\textit{We have not succeeded in answering all our problems.\\
The answers we have found only serve to raise a whole set\\
of new questions. In some ways we feel we are as confused\\
as ever, but we believe we are confused on a higher level\\
and about more important things.\\ \ \\}

Posted outside the mathematics reading room --Tromsø University
\end{center}

\end{document}